\documentclass{article}

\PassOptionsToPackage{numbers, compress}{natbib}

\usepackage[preprint]{neurips_2026} 

\usepackage[utf8]{inputenc}
\usepackage[T1]{fontenc}
\usepackage{hyperref}
\usepackage{url}
\usepackage{booktabs}
\usepackage{amsfonts}
\usepackage{amsmath}
\usepackage{amsthm}
\usepackage{nicefrac}
\usepackage{microtype}
\usepackage{xcolor}
\usepackage[textsize=tiny]{todonotes}

\usepackage{amssymb}
\usepackage{mathtools}

\usepackage{cleveref}
\usepackage{algorithm}
\usepackage{algorithmic}
\usepackage{dsfont}

\usepackage{multirow}
\usepackage{graphicx}
\usepackage{makecell}
\usepackage{wrapfig}

\theoremstyle{plain}
\newtheorem{theorem}{Theorem}[section]

\theoremstyle{definition}

\theoremstyle{remark}

\newcommand{\std}[1]{\textsuperscript{\textcolor{gray}{\tiny$\pm$#1}}}

\newcommand{\squeezeup}{\vspace{-2.5mm}}

\title{Filter-then-Weight: Online Data Selection and Reweighting for LLM Fine-Tuning}

\author{%
  Fangxin Wang, Peyman Baghershahi, Langzhou He,\\
  \textbf{Henry Peng Zou, Sourav Medya, Philip S. Yu}\\
  Department of Computer Science\\
  University of Illinois Chicago\\
  \texttt{\{fwang51,pbaghe2,lhe24,pzou3,medya,psyu\}@uic.edu}
}

\begin{document}

\maketitle

% ============================
% Abstract
% ============================

\begin{abstract}
Gradient-based data selection offers a principled framework for estimating sample utility in large language model (LLM) fine-tuning, but existing methods are mostly designed for offline settings. They are therefore less suited to online fine-tuning, where data arrives sequentially, sample utility is step-dependent, and the effective update geometry is shaped by adaptive optimizers. We propose an optimizer-aware framework for gradient-based online data selection and reweighting in LLM fine-tuning. Our key idea is to view online selection not as static sample ranking, but as shaping the next target-oriented update under the current optimizer state. We formulate this as an optimizer-aware update-matching problem, establish its connection to second-order target utility, and show why subset-level construction must account for interactions and redundancy among selected samples. Based on this view, we develop a two-stage Filter-then-Weight algorithm that first filters geometrically useful candidates and then optimizes their coefficients. To make the framework practical for LLMs, we introduce a factorized outer-product gradient representation and optimized matrix computations for long-context data. Experiments show that our method consistently improves convergence and downstream performance over existing online data selection baselines under the same data budget.
\end{abstract}

% ============================
% Main Sections
% ============================

\section{Introduction}
\label{sec: intro}

Data selection for large language models (LLMs) aims to curate a representative subset from massive training corpora, enabling models to achieve convergence with reduced data consumption and improved performance. While heuristic metrics—such as representation similarity~\citep{zhang2018unreasonable,hanawaevaluation,xie2023data} and sample learning difficulty~\citep{li2024quantity}—offer simple filtering criteria, gradient-based methods~\citep{TracIn,xia2024less} provide a theoretically grounded framework that explicitly quantifies a sample's contribution to model parameter updates~\citep{koh2017understanding, grosse2023studying}. However, these methods generally require additional forward and backward passes to compute sample gradients, introducing severe computational overhead that scales linearly with the number of training samples and model parameters.

% related work data selection
Current gradient-based data selection methods can be broadly categorized into two types~\citep{qinunleashing}: gradient-based influence and gradient matching. Influence methods focus on selecting the most impactful individual data points by evaluating the similarity between a training sample's gradient and the validation gradient. Early works~\citep{koh2017understanding, TracIn} estimate this via the inner product of first-order gradient approximations. Recent adaptations for instruction tuning, such as~\cite{xia2024less}, extend to accommodate the Adam optimizer and variable instruction lengths, while~\cite{wang2024greats} introduces an online algorithm that dynamically corrects sample influence based on previously selected points. Conversely, gradient matching methods~\citep{killamsetty2021grad} optimize the weights of training samples to align the selected batch with a target distribution. For instance, at the domain level, \cite{fan2024doge} updates domain weights to maximize the inner product between the weighted training gradients and the target domain gradient. Similarly, \cite{zhang2025tagcos} applies greedy coreset selection to clustered gradients for task-agnostic coverage.
% INFLUENTIAL LANGUAGE DATA SELECTION VIA GRADIENT TRAJECTORY PURSUIT unpublished, not count

%maybe add which optimizer they usually use and why Adam is non-trivial

% difference between online and offline data selection
Most aforementioned approaches~\citep{TracIn,xia2024less} are designed for the offline setting, where the full dataset is available for pre-computing static gradients. Whereas in online fine-tuning~\citep{aljundi2019gradient}, the training data arrives sequentially (e.g., continual instruction tuning or incremental memory updates), and selection decisions must be made on-the-fly without full corpus access. In this dynamic scenario, sample's utility is inherently dependent on the model’s current parameters, creating a sequential, step-wise decision problem. This differs fundamentally from offline methods, which rely on static gradient estimates and treat selection as a single-step problem, ignoring the impact of training order. 
Furthermore, the standard offline strategy of averaging gradients over multiple historical checkpoints becomes computationally infeasible, as strict latency and storage constraints preclude the maintenance of extensive model history in a streaming setting.
Beyond these structural and efficiency challenges, effective online selection requires strictly aligning with the specific update rules of the optimizer to ensure convergence, whereas dominant existing works~\citep{TracIn,wang2024greats,fan2024doge} typically assume simple Stochastic Gradient Descent (SGD) approximations. A very recent concurrent work~\citep{wang2026opus} also targets 
online optimizer-aware data selection, but in the pre-training 
setting and within the individual-scoring paradigm; it therefore 
does not address interactions among jointly selected samples that 
arise in subset-level reweighting.

These observations suggest that online data selection should be viewed not as static sample ranking, but as shaping the next training update toward the target task—jointly accounting for both optimizer geometry and sample interactions. Motivated by this view, we propose an optimizer-aware framework for online data selection and reweighting in LLM fine-tuning. Our key idea is to formulate the problem as \emph{subset-level gradient matching}: rather than scoring samples individually~\citep{TracIn,xia2024less,wang2026opus}, we optimize a non-negative weighting over a candidate subset so that its composite gradient approximates the target-oriented update under the geometry induced by the optimizer state.

Building on this formulation, we develop a two-stage \emph{Filter-then-Weight} algorithm that decouples candidate filtering from coefficient optimization: a greedy residual filter identifies geometrically useful samples under unit weights, and a constrained reweighting stage refines their coefficients to form a precise composite update. To make this framework tractable for LLMs, we use a factorized outer-product gradient representation that preserves more information than full-gradient compression under the same memory budget, and further optimize matrix multiplications for long-context data. In addition, our formulation reveals a connection between gradient matching and second-order target utility, which provides a principled interpretation for subset-level update construction and redundancy control. Experimental results demonstrate that our method outperforms current online data selection algorithms under the same data budget while improving downstream performance. Our ablations further show that optimizer-awareness is essential for reweighting, and decoupling filtering from coefficient optimization leads to more robust and stable online selection.
\squeezeup
\section{Problem Setup}
\label{sec: prelim}
\squeezeup
\subsection{Online Training Data Selection}
\label{setting: dataselection}
We consider a large-scale training corpus, denoted as $\mathcal{D}_{tr} := \{x_i\}^{N}_{i=1}$, which has mixed knowledge type and data quality. Rather than minimizing the training loss over $\mathcal{D}_{tr}$, our goal is to optimize the model performance on the \emph{target distribution} from a different distribution. Let $\mathcal{D}_{tar}$ denote a small validation or target dataset drawn from the downstream task distribution, and let $\mathcal{L}_{\mathrm{tar}}(\theta):= \mathbb{E}_{x_j \sim \mathcal{D}_{tar}} [l(\theta, x_j)]$, where $l$ is the loss function.
Note that the validation set typically is considerably smaller than the training set, i.e., $|\mathcal{D}_{tar}| \ll |\mathcal{D}_{tr}|$. Given limited validation data, our primary objective is to improve the predictive performance of a fine-tuned model on a specific target task by strategically leveraging the vast but noisy training corpus. 

% For our fine-tuning setup, a naive approach of utilizing the entire training corpus is not only computationally inefficient but can also be detrimental to model performance. Such a strategy may lead to adverse phenomena like model misalignment or catastrophic forgetting. Considering standard batch-based training paradigm, an effective approach may be to select a smaller but high-quality subset strategically from $\mathcal{D}_{tr}$ that is maximally beneficial for the target task. The aim of this approach is to accelerate model convergence and mitigate the performance degradation that can arise from error accumulation, particularly during the initial stages of training.

%For our fine-tuning setup, a naive approach of utilizing the entire training corpus is not only computationally inefficient but can also be detrimental to model performance. Such a strategy may lead to adverse phenomena like model misalignment or catastrophic forgetting, where task-specific knowledge acquired during pre-training is supplanted by noisy or task-irrelevant information from the training data. Considering standard batch-based training paradigm, an effective approach may be to select a smaller but high-quality subset strategically and iteratively from $\mathcal{D}_{tr}$ that is maximally beneficial for the target task. The aim of this approach is to accelerate model convergence and mitigate the performance degradation that can arise from error accumulation, particularly during the initial stages of training.

\textbf{Online and Memory-Constrained Data Selection.}
We consider an online fine-tuning setting where training proceeds 
in a sequential batch-based manner. At each iteration $t$, the 
algorithm is presented with a candidate pool 
$\mathcal{S}^{(t)} \subset \mathcal{D}_{\mathrm{tr}}$ and must decide 
how to leverage its samples for parameter updates without persistent 
access to the full corpus. A limited memory buffer of previously 
seen samples may be revisited under a fixed memory budget, but full 
data replay is prohibited—reflecting practical constraints in 
large-scale streaming fine-tuning where selection decisions must be 
made on-the-fly based on the current model state.

% \textbf{Weighted Training Update.} We use continuous weights $w_i \in \mathbb{R}$ to select data points. When $w_i \in \{0,1\}$, our formulation reduces to standard data selection, where a subset of samples is chosen for training. In contrast, using continuous weights  enables a more expressive mechanism that subsumes both data selection and data reweighting, and allows the contribution of each sample to be adaptively modulated based on its relevance to the target task. More specifically, 
% given a candidate mini-batch $\mathcal{S}^{(t)}$, we assign a weight $w_i \in \mathbb{R}$ to each sample $x_i \in \mathcal{S}^{(t)}$. The resulting weighted training gradient at step $t$ is 
% \begin{equation}
% \label{eq: weighted moment}
% g_t(w) = \nabla \mathcal{L}_{tr}^{(t)} (\theta_{t-1}; w) = \sum_{x_i \in \mathcal{S}^{(t)}} w_i \nabla l(\theta_{t-1}, x_i),
% \end{equation}
% where $\theta_{t-1}$ denotes the model parameters before the $t$-th update, and weights $w = \{w_i\}$. The model parameters are then updated using a gradient-based optimizer to a new model $\theta_{t}$:
% \begin{equation}
% \label{eq: para update}
% \theta_t=\theta_{t-1} - \eta_t \, \mathcal{P}_t( g_t(w) ),
% \end{equation}
% where $\eta_t$ is the learning rate, $\mathcal{P}_t$ denotes the possibly adaptive function induced by the optimizer, discussed later.

\textbf{Weighted training update.}
We use continuous weights $w_i \in \mathbb{R}_{\geq 0}$ to score 
candidate samples. Setting $w_i \in \{0, 1\}$ recovers standard subset 
selection, while continuous weights subsume both selection and 
reweighting, allowing each sample's contribution to be adaptively 
modulated based on its relevance to the target task. Given a 
candidate mini-batch $\mathcal{S}^{(t)}$, the resulting weighted 
training gradient at step $t$ is
\begin{equation}
  g_t(w) = \nabla \mathcal{L}_{\mathrm{tr}}^{(t)}(\theta_{t-1}; w) 
         = \sum_{x_i \in \mathcal{S}^{(t)}} w_i \nabla \ell(\theta_{t-1}, x_i),
\end{equation}
and the model parameters are updated via
\begin{equation}
  \theta_t = \theta_{t-1} - \eta_t\, P_t(g_t(w)),
\end{equation}
where $\eta_t$ is the learning rate and $P_t$ is the possibly adaptive 
operator induced by the optimizer.

\subsection{Step-Wise Target Loss Reduction}
\label{setting: loss}
At each step $t$, our objective is to choose $w$ to directly minimize 
the target loss $\mathcal{L}_{\mathrm{tar}}(\theta_t)$ under the 
actual optimization dynamics used for fine-tuning. For brevity, we 
write $\nabla \ell_i := \nabla \ell(\theta_{t-1}, x_i)$ and 
$\nabla \ell_j := \nabla \ell(\theta_{t-1}, x_j)$ for training and 
validation gradients respectively.

To connect the choice of weights $w$ with the optimization of the target loss, following ~\cite{TracIn}, we consider the first-order Taylor expansion of $\mathcal{L}_{\mathrm{tar}}(\theta)$ around $\theta_{t-1}$:
\begin{equation}
\label{eq: taylor}
\begin{split}
\mathcal{L}_{\mathrm{tar}}(\theta_t)
& \approx \mathcal{L}_{\mathrm{tar}}(\theta_{t-1}) + \eta_t (\theta_{t-1} - \theta_t) \nabla \mathcal{L}_{\mathrm{tar}} (\theta_{t-1}) \\
& = \mathcal{L}_{\mathrm{tar}}(\theta_{t-1}) -\eta_t \left\langle \nabla \mathcal{L}_{\mathrm{tar}}(\theta_{t-1}), \, \mathcal{P}_t ( g_t (w) ) \right\rangle.
\end{split}
\end{equation}
Under this approximation, minimizing the target loss after one update step is equivalent to maximizing the alignment between the weighted training and the target gradients under the optimizer-induced geometry. Consequently, the weight assignment problem at step $t$ can be formulated as
\begin{equation}
\label{eq: weighted objective}
\max_{w} \; \left\langle \nabla \mathcal{L}_{\mathrm{tar}}(\theta_{t-1}), \; \mathcal{P}_t ( \sum_{x_i \in \mathcal{S}^{(t)}} w_i \nabla l_i) \right\rangle
\end{equation}
With the details of proofs in Appendix \ref{[proof]: eq of 1 order}, this formulation highlights that the optimal weights depend not only on the similarity between individual training gradients and the target gradient, but also on the underlying optimizer through $P_t$. As a result, the data weighting problem is inherently optimizer-aware and evolves dynamically across training steps.
%The weight calculation depends will be detailed in the methodology Section \ref{sec:two_stage}. 
We also formulate the weight assignment problem with second-order Taylor expansion approximation in Appendix \ref{[proof]: eq of 2 order}.

\textbf{Optimizers ($\mathcal{P}_t$).} We derive the corresponding optimization goal using prevalent optimizers:
(i) \textit{Stochastic Gradient Descent (SGD)~\citep{robbins1951stochastic}.} For SGD, the update reduces to a standard gradient descent step, i.e., $\mathcal{P}_t(g) = g$. (ii) \textit{Adam optimizer~\citep{kinga2015adam}. }It  maintains exponential moving averages of both the first-order and second-order moments of the stochastic gradients, generally achieving faster and better convergence in LLM fine-tuning. Specifically, $\mathcal{P}_t(g) = \frac{ \hat{m}_t (g)} {\sqrt{\hat{v}_t (g)} + \epsilon }$, where the first and second moment estimates  are $\hat{m}_t (g) = \frac{\beta_1 m_{t-1} + (1 - \beta_1) g}{1 - \beta_1^t }$ and $\hat{v}_t (g) = \frac{\beta_2 v_{t-1} + (1 - \beta_2) g^2}{1 - \beta_2^t}$ respectively. Here, $\beta_1, \beta_2 \in [0,1)$ are decay coefficients, $\epsilon > 0$ is a small constant for numerical stability, and the square is taken element-wise. 
\squeezeup
\section{Methodology}
\label{sec:method}

We now develop a practical algorithm to optimize the step-wise 
target loss reduction objective in Eq.~\ref{eq: weighted objective} 
under the computational constraints of LLM fine-tuning. 
Section~\ref{sec:method:utility} begins with reformulating online selection 
as a \emph{subset-level gradient matching} problem, exposing why 
the additive view of batch utility is structurally inadequate. 
Building on this, Section~\ref{sec:method:ftw} 
presents our central algorithm: a decoupled 
\emph{Filter-then-Weight} procedure that solves the matching 
problem in two stages. Section~\ref{sec:target_precond} then 
integrates the Adam preconditioner into the algorithm, and 
Section~\ref{sec:efficient_gradient} concludes with the engineering 
ingredients required to scale subset-level reweighting to 
long-context LLMs.

\subsection{Subset Utility: From Additive Scoring to Gradient Matching}
\label{sec:method:utility}

At iteration $t$, we quantify the utility of a candidate mini-batch 
$\mathcal{S}^{(t)}$ by its one-step impact on the target loss. 
Approximating the target objective using a validation mini-batch 
$\mathcal{S}^{(t)}_{\mathrm{val}}$, the averaged validation gradient 
is $\nabla \ell_{\mathrm{val}} := \frac{1}{|\mathcal{S}^{(t)}_{\mathrm{val}}|} 
\sum_{x_j \in \mathcal{S}^{(t)}_{\mathrm{val}}} \nabla \ell_j$. 
Following Eq.~\ref{eq: weighted objective}, selecting and weighting 
training samples amounts to maximizing the alignment between the 
weighted training gradient and the validation gradient under the 
optimizer-induced operator:
\begin{equation}
  \max_w\; \Big\langle \nabla \ell_{\mathrm{val}},\, 
              P_t\big( {\textstyle\sum_{x_i \in \mathcal{S}^{(t)}}}
                       w_i \nabla \ell_i \big) \Big\rangle.
  \label{eq:utility}
\end{equation}

\textbf{Why additive individual scoring fails.}
When the optimizer is linear ($P_t(g) = g$) and weights are restricted 
to $w_i \in \{0, 1\}$ with $\sum_i w_i = k$, 
Eq.~\ref{eq:utility} decomposes additively across samples, 
recovering the gradient-similarity criterion of 
\citep{TracIn,xia2024less}: 
$U(\mathcal{S}^{(t)}; \mathcal{S}^{(t)}_{\mathrm{val}}) = 
\sum_{x_i \in \mathcal{S}^{(t)}} \langle \nabla \ell_{\mathrm{val}}, 
\nabla \ell_i \rangle$. 
Three structural problems emerge once we leave this restricted regime. 
First, the composite gradient is determined by both the subset and 
the weights; selection alone cannot recover the right mixture 
coefficients. Second, two samples with identical individual scores 
can be either complementary or redundant depending on their pairwise 
inner product, and individual scoring cannot distinguish the two cases. 
Third, when $P_t$ is nonlinear (as in Adam optimizer), the additive 
decomposition fails outright: the alignment of the weighted sum is 
not the sum of individual alignments.

\textbf{Reformulation as gradient matching.}
We therefore treat selection-and-weighting as \emph{subset-level 
gradient matching}: find a small set of candidates and non-negative 
weights whose composite gradient best matches the target-aligned 
update. This is naturally cast as a constrained least-squares problem,
\begin{equation}
  \min_{w \geq 0}\; 
    \big\| \nabla \ell_{\mathrm{val}} 
           - P_t\big( {\textstyle\sum_{x_i \in \mathcal{S}^{(t)}}}
                      w_i \nabla \ell_i \big) \big\|_2^2 
    + \lambda \|w\|_2^2.
  \label{eq:matching}
\end{equation}
Although gradient-matching objectives have appeared in offline coreset 
selection~\citep{killamsetty2021grad,mirzasoleiman2020coresets}, 
Eq.~\ref{eq:matching} differs in three ways that prove decisive in 
our setting: weights are constrained to be non-negative, the operator 
$P_t$ injects optimizer state directly into the objective, and—as 
developed in Section~\ref{sec:method:ftw}—the problem must be solved 
in a decoupled rather than joint manner to remain stable on LLM 
gradients.

\textbf{Why non-negativity matters.} As shown by 
\cite{slawski2013non}, unconstrained least-squares matching in 
high-dimensional spaces with co-linear features is prone to 
destructive cancellation: the solver subtracts large opposing 
vectors to fit the target, producing a composite update whose 
individual components contradict each other. In stochastic LLM 
fine-tuning this is catastrophic—the optimizer is asked to move 
away from certain training samples to satisfy the matching 
objective. Restricting $w \geq 0$ enforces constructive accumulation 
only, prunes contradictory candidates by setting their weights to 
zero, and induces sparsity such that only samples positively 
contributing to the target direction are selected. Our ablation 
(Section~\ref{sec:ablation}) confirms that lifting this 
constraint causes immediate divergence.

\textbf{Relation to alignment objectives.}
For linear optimizer mappings (e.g., SGD), Eq.~\ref{eq:matching} 
preserves the optimal solutions of common alignment objectives 
(inner-product or cosine) under mild conditions; the formal statement 
and proof are given in Appendix~\ref{sec:inner_cosine}. For 
nonlinear optimizers such as Adam/AdamW, strict equivalence no longer 
holds, but Eq.~\ref{eq:matching} remains geometrically meaningful: 
it matches effective update directions in the optimizer-induced 
feature space, serving as a principled surrogate for optimizer-aware 
online selection.

\textbf{Second-order interpretation.}
Eq.~\ref{eq:matching} also admits a second-order interpretation. 
Approximating the target Hessian by an isotropic operator, the 
second-order Taylor expansion of the target loss reduces, up to a 
weight-norm penalty, to the gradient-matching objective in 
Eq.~\ref{eq:matching}; the derivation is given in 
Appendix~\ref{appendix: connection}. The quadratic term arising from this 
expansion introduces \emph{pairwise interactions} among selected 
samples, naturally penalizing redundant gradients and producing the 
diminishing-returns behavior that the additive view cannot express. 

\subsection{Filter-then-Weight: A Decoupled Two-Stage Algorithm}
\label{sec:method:ftw}

Solving Eq.~\ref{eq:matching} directly through coupled 
selection-and-weighting—e.g., Orthogonal Matching Pursuit 
(OMP)~\citep{killamsetty2021grad}, which re-solves the weights every 
time a new sample is added—is the natural choice and the one taken 
by prior gradient-matching work. We find this coupled formulation 
\emph{fails on LLM gradients}: the per-step weight solver chases 
high-frequency noise from stochastic sampling, random projection, 
and limited validation batches, producing unstable selections. The 
same behavior is visible in our baseline, \textsc{Grad-Match}~\citep{killamsetty2021grad} (Section~\ref{subsec: main}).

We therefore decouple the two operations into a robust 
\emph{filtering} stage that identifies a candidate backbone using 
unit weights, and a precise \emph{weighting} stage that solves for 
the final coefficients only over the filtered set. The intuition is 
that filtering with unit weights is a low-frequency operation 
insensitive to small gradient perturbations, while the weighting 
stage operates on a much smaller and well-conditioned problem—each 
stage is exposed to the noise level it can handle. We build on the 
expansion of Eq.~\ref{eq:matching} into a precomputed Alignment 
Vector $\mathbf{b}$ and Gram Matrix $\mathbf{G}$ (detailed in 
Appendix~\ref{section: omp}), which reduces both stages to 
small-scale operations on these reusable components.

\textbf{Stage 1: Greedy residual filtering.}
Given a candidate pool $\mathcal{S}^{(t)}$ and a budget 
$B_{\mathrm{tr}}$, we initialize a residual 
$r \leftarrow \nabla\tilde\ell_{\mathrm{val}}$ (the preconditioned 
target; see Section~\ref{sec:target_precond}) and iteratively select 
the candidate whose gradient most reduces the residual:
\begin{equation}
  i^* = \arg\max_{i \notin \mathcal{S}'}\; 
        \langle \nabla \ell_i,\, r \rangle, 
  \qquad
  r \leftarrow r - \nabla \ell_{i^*},
  \label{eq:greedy_filter}
\end{equation}
until $|\mathcal{S}'| = B_{\mathrm{tr}}$. This is OMP \emph{without} 
the inner weight-resolve step: by holding weights at unity, we rank 
candidates purely by their geometric contribution to the residual, 
prioritizing diversity over fit. The procedure is robust because no 
small noisy quantity is being inverted at each step.

\textbf{Stage 2: Non-negative least squares reweighting.}
Given the filtered set $\mathcal{S}'$, we extract the corresponding 
sub-vector $\mathbf{b}_{\mathcal{S}'}$ and sub-matrix 
$\mathbf{G}_{\mathcal{S}'}$ and solve the resulting non-negative least 
squares (NNLS) problem:
\begin{equation}
  w^* = \arg\min_{w \geq 0}\; 
        w^\top \mathbf{G}_{\mathcal{S}'}\, w 
        - 2\, \mathbf{b}_{\mathcal{S}'}^\top w + \lambda \|w\|_2^2.
  \label{eq:nnls}
\end{equation}
Because $|\mathcal{S}'|$ equals the per-step batch size, this 
quadratic program is negligible relative to gradient computation. 
The non-negativity constraint follows Section~\ref{sec:method:utility}, while the global 
optimization over the fixed set ensures the composite gradient 
maintains high fidelity to the optimizer-aware target. The composite 
gradient $\bar\nabla \ell^{(t)} = \sum_{i \in \mathcal{S}'} w_i^* 
\nabla \ell_i$ is then used to update the model. The complete in-step 
training procedure is detailed in 
Appendix~\ref{section: our pseudocodes}, Algorithm~\ref{alg: ft w. ds}.

\textbf{Why decoupling is the critical design choice.}
The two stages address different failure modes of the coupled solver. 
Stage 1 commits early to a geometrically diverse backbone, 
eliminating the joint solver's tendency to pick samples that cancel 
one another's noise. Stage 2 then refines coefficients globally over 
a fixed, small set, allowing precise alignment with the target 
without the instability of greedy weight updates. Our ablations 
(Section~\ref{sec:ablation}) show two consequences: (i) coupling the 
stages, as in standard OMP, recovers the unstable behavior of 
\textsc{Grad-Match}; (ii) replacing the greedy filter with top-$k$ 
selection achieves comparable peak performance but degrades over the 
training trajectory because top-$k$ admits redundant samples. Both 
findings indicate that decoupling—rather than any one of its 
components in isolation—is what makes subset-level reweighting 
tractable on LLMs.

\subsection{Optimizer-Aware Preconditioning}
\label{sec:target_precond}

The Filter-then-Weight algorithm in Section~\ref{sec:method:ftw} 
requires a preconditioned target $\nabla\tilde\ell_{\mathrm{val}}$ 
that reflects the optimizer's effective update geometry. For Adam, 
$\mathcal{P}_t(g) = \hat{m}_t(g) / (\sqrt{\hat{v}_t(g)} + \epsilon)$ 
depends on the candidate-dependent second moment $\hat{v}_t$, making 
Eq.~\ref{eq:matching} nonlinear in $w$ and intractable to optimize 
exactly. We adopt a \emph{linearized Adam} approximation that 
freezes the preconditioner at the previous-step state:
\begin{equation}
\label{eq:linearized_adam}
\mathcal{P}_t(g) \approx \mathbf{D}_{t-1} \odot g,  
\quad \mathbf{D}_{t-1} = 
\frac{1-\beta_1}{(1-\beta_1^t)(\sqrt{\hat{v}_{t-1}} + \epsilon)},
\end{equation}
which is well-justified at fine-tuning time scales because 
$\hat{v}_t$ is dominated by historical accumulation and changes 
slowly between adjacent steps.

\textbf{Efficient implementation via target-side absorption.}
A natural design question is whether $\mathbf{D}_{t-1}$ should 
appear only in the alignment vector $\mathbf{b}$ (first-order 
preconditioning), or also in the Gram matrix $\mathbf{G}$ 
(second-order preconditioning). In both cases, the diagonal 
structure of $\mathbf{D}_{t-1}$ admits an efficient implementation: 
all preconditioning can be absorbed into the validation side, 
keeping the candidate-side computation entirely in raw form. For 
the first-order term,
\begin{equation}
\label{eq:adjoint_transfer}
 \langle \nabla\ell_{\mathrm{val}},\, 
          \mathbf{D}_{t-1} \odot \nabla\ell_i \rangle
  = \langle \mathbf{D}_{t-1} \odot \nabla\ell_{\mathrm{val}},\, 
        \nabla\ell_i \rangle,
\end{equation}
so $\mathbf{D}_{t-1} \odot \nabla\ell_{\mathrm{val}}$ can be 
precomputed once per step. For the second-order Gram term, the 
identity
\begin{equation}
\label{eq:second_order_pull}
\langle \mathbf{D}_{t-1} \odot \nabla\ell_i,\, 
        \mathbf{D}_{t-1} \odot \nabla\ell_j \rangle 
= \langle \nabla\ell_i,\, 
          \mathbf{D}_{t-1}^2 \odot \nabla\ell_j \rangle
\end{equation}
similarly absorbs the two-sided preconditioning into a single 
target-side rescaling $\mathbf{D}_{t-1}^2 \odot 
\nabla\ell_j$, computed once per validation sample per step. 
This 
target-side absorption avoids any per-candidate rescaling and 
makes the two preconditioning variants computationally equivalent. 

% Our ablation 
% (Section~\ref{sec:ablation}) finds the additional second-order 
% rescaling to have a minor effect on downstream performance, so we 
% adopt the simpler first-order form by default:
% \begin{equation}
% \mathbf{b}_i = \langle \mathbf{D}_{t-1} \odot 
% \nabla\ell_{\mathrm{val}},\, \nabla\ell_i \rangle,
% \qquad
% \mathbf{G}_{ij} = \langle \nabla\ell_i,\, \nabla\ell_j \rangle.
% \end{equation}

\textbf{Compatibility with random projection.}
The historical second moment $\hat{v}_{t-1}$ cannot be directly 
projected because squaring is nonlinear under random projection. 
We instead maintain a separate second-moment estimate computed 
from projected gradients using the same projection matrix, 
ensuring that $\mathbf{D}_{t-1}$ in Eq.~\ref{eq:linearized_adam} 
shares the same space as the projected alignment computation. The 
additional cost is negligible.

\begin{table}[b]
\centering
\small
\caption{Time and space complexity comparison for sequential data in a batch.}
\label{tab:complexity_comparison}
\setlength{\tabcolsep}{6pt}
\resizebox{0.9\textwidth}{!}{%
\begin{tabular}{lccc}
\toprule
 & \textbf{Naive} & \textbf{Ghost} & \textbf{Ours} \\
\midrule
\textbf{Time} 
& $\mathcal{O}(LT^2B_{tr}B_{val}d_1d_2)$
& $\mathcal{O}(LT^2B_{tr}B_{val}(d_1+d_2))$
& $\mathcal{O}(LTB_{tr}d_2(B_{val}d_1 + T))$ \\[6pt]
\textbf{Space} 
& $\mathcal{O}(LB_{tr}B_{val}d_1d_2)$
& $\mathcal{O}(LT(B_{tr}+B_{val})d_1 + T^2B_{tr}B_{val})$
& $\mathcal{O}(LT(B_{tr}+B_{val})d_1 + B_{tr}d_2\max(d_1,T))$ \\
\bottomrule
\end{tabular}
}
\end{table}

\subsection{Efficient Gradient Computation for LLMs}
\label{sec:efficient_gradient}

Computing the inner products required by $\mathbf{b}$ and 
$\mathbf{G}$ over full LLM gradients $\nabla\ell_i \in \mathbb{R}^n$ 
is infeasible: $n$ is large, and per-sample gradients on long-context 
inputs further inflate intermediate tensors along the sequence 
dimension $T$. We address both with a sequence of structural 
simplifications, ending in a representation whose cost scales 
favorably with $T$.

\textbf{LoRA + random projection.}
We fine-tune with LoRA~\citep{hu2022lora}, which freezes the 
backbone and injects small low-rank adapters, reducing the effective 
gradient dimension to $\sim$1\% of the full model. We further apply 
random projection~\citep{xia2024less, wangcapturing} with matrix 
$\mathbf{R} \in \mathbb{R}^{d \times n}$, $d \ll n$, drawn from a 
Gaussian distribution. By the Johnson–Lindenstrauss 
lemma~\citep{johnson1984extensions}, projected inner products 
approximately preserve the originals, $\langle \mathbf{Ru}, 
\mathbf{Rv}\rangle \approx \langle \mathbf{u}, \mathbf{v}\rangle$, at 
substantially reduced cost. To keep the candidate pool 
$\mathcal{S}^{(t)}$ within GPU memory, we further partition it into 
mini-batches $\mathcal{S}^{(t,c)}$, $c = 1, \ldots, \alpha$, 
processed in a manner analogous to gradient accumulation.

\textbf{Factorized rank-1 representation.}
For a linear layer with weight $W \in \mathbb{R}^{d_1 \times d_2}$ 
(including LoRA's down/up projections), the per-sample gradient 
decomposes as the outer product of input activation 
$a_i \in \mathbb{R}^{d_1}$ and pre-activation gradient 
$g_i \in \mathbb{R}^{d_2}$: $\nabla\ell_i = g_i a_i^\top$. 
By the Kronecker product property, inner products factorize as
\begin{equation}
\label{eq: 1-order dotprod}
\langle \nabla\ell_i, \nabla\ell_j \rangle 
= (g_j^\top g_i)(a_j^\top a_i),
\end{equation}
the \emph{ghost dot-product} of \citep{bu2023differentially, 
wang2025capturing}, which avoids materializing the full $d_1 \times 
d_2$ gradient matrix. Crucially, by working with the rank-1 factors 
directly, projection cost scales as $d_1 + d_2$ rather than 
$d_1 d_2$, preserving substantially more information per byte of 
memory than projecting the full gradient.

\textbf{Sequence-length-aware reordering.}
For sequences of length $T$, the rank-1 factors become matrices 
$a_i \in \mathbb{R}^{d_1 \times T}$ and $g_i \in \mathbb{R}^{d_2 
\times T}$, so naive evaluation of Eq.~\ref{eq: 1-order dotprod} 
materializes $T \times T$ intermediate matrices of cost 
$\mathcal{O}(T^2)$ per pair, dominating the budget on long-context 
LLMs. With training and validation batch sizes $B_{tr} = 
|\mathcal{S}^{(t)}|$ and $B_{val} = |\mathcal{S}_{\text{val}}^{(t)}|$, 
this scales to prohibitive $TB_{val} \times TB_{tr}$ memory. We 
reorder the computation to aggregate over the validation batch 
\emph{before} touching individual training samples:
\begin{equation}
\label{eq:reordered inner-prod}
\langle \nabla\ell_i, \nabla\ell_{\mathrm{val}} \rangle 
= \Big\langle \hat{g}_i,\; \hat{a}_i 
   \Big( \sum_{x_j \in \mathcal{S}_{\text{val}}^{(t)}} 
         \hat{a}_j \hat{g}_j^\top \Big) \Big\rangle,
\end{equation}
where $\hat{a}, \hat{g}$ denote projected factors. The bracketed sum 
is a reusable $k \times k$ matrix computed once per step, collapsing 
the dependence on $T$ from quadratic to linear. When $B_{tr}$ and $T$ grow large, the 
Ghost formulation OOMs while ours sustains a substantially higher 
ceiling—at which point the binding constraint shifts to model-side 
kernels (e.g., the cross-entropy logits buffer) rather than the 
score tensor itself. We report the full empirical analysis in 
Appendix~\ref{sec:memory_frontier}.

% \textbf{Sequence-length-aware reordering.}
% For sequences of length $T$, the rank-1 factors become matrices 
% $a_i \in \mathbb{R}^{d_1 \times T}$ and $g_i \in \mathbb{R}^{d_2 
% \times T}$, so naive evaluation of Eq.~\ref{eq: 1-order dotprod} 
% materializes $T \times T$ intermediate matrices of cost 
% $\mathcal{O}(T^2)$ per pair, dominating the budget on long-context 
% LLMs. With training and validation batch sizes $B_{tr}$ and $B_{val}$, 
% this scales to prohibitive $TB_{val} \times TB_{tr}$ memory. We 
% reorder the computation to aggregate over the validation batch 
% \emph{before} touching individual training samples:
% \begin{equation}
% \label{eq:reordered inner-prod}
% \langle \nabla\ell_i, \nabla\ell_{\mathrm{val}} \rangle 
% = \Big\langle \hat{g}_i,\; \hat{a}_i 
%    \Big( \sum_{x_j \in \mathcal{S}_{\text{val}}^{(t)}} 
%          \hat{a}_j \hat{g}_j^\top \Big) \Big\rangle,
% \end{equation}
% where $\hat{a}, \hat{g}$ denote projected factors. The bracketed sum 
% is a reusable $k \times k$ matrix computed once per step, collapsing 
% the dependence on $T$ from quadratic to linear.
% Table~\ref{tab:complexity_comparison} compares per-batch costs of 
% the naive, ghost, and our implementations; detailed derivations are 
% in Appendix~\ref{sec: Complexity Comparison}. Our method is strictly 
% better than ghost in both dimensions whenever $d_1, d_2 < T$—a 
% regime that holds for modern long-context LLMs and is further 
% enforced by random projection.

\squeezeup
\section{Experiments}
\label{sec: exp}
\squeezeup
We evaluate the proposed optimizer-aware online data selection framework on two representative instruction-following benchmark datasets. 
Our experiments aim to answer the following questions: 1. \textbf{Effectiveness}: Can the proposed framework improve performance compared to existing data selection methods? (Section \ref{subsec: main} \#1-3) 2. \textbf{Data efficiency}: How does the method behave under strict training budget constraints? (Section \ref{subsec: main} \#4)  3.\textbf{Mechanism}: Which components of the framework contribute most to the observed gains? (Section \ref{sec:ablation})
To this end, we conduct comparisons under both best-of-run and fixed-budget settings, followed by detailed ablation studies and sensitivity analyses. %Codes are provided in \url{https://anonymous.4open.science/r/FWDS/}.

\begin{table*}[t]
\centering
\small
\setlength{\tabcolsep}{6pt}
\caption{\textbf{Best-of-run performance.} For each seed we select 
the best checkpoint per method and report mean\std{std} over seeds. 
\textbf{Bold} marks the best result, \underline{underline} the 
second best.}
\label{tab:best_of_run}
\resizebox{\textwidth}{!}{
\begin{tabular}{ll|cc|ccc|c|c}
\toprule
& & \multicolumn{2}{c|}{\emph{References}} & \multicolumn{3}{c|}{\emph{Selection only}} & \emph{Coupled} & \emph{Ours} \\
Dataset & Backbone & Origin & Full & TracIn & LESS & GREATS & GRAD-MATCH & Filter-then-Weight \\
\midrule
\multirow{2}{*}{MMLU (Acc)}
& Llama-3.2-1B
& 28.13 \std{0.00}
& 29.09 \std{1.01}
& 29.24 \std{0.24}
& 29.74 \std{0.07}
& 30.07 \std{0.37}
& \underline{30.14 \std{0.41}}
& \textbf{30.93 \std{0.03}} \\
& Qwen3-0.6B
& 45.86 \std{0.00}
& \textbf{46.64 \std{0.17}}
& 46.45 \std{0.08}
& 46.56 \std{0.28}
& 46.06 \std{0.42}
& 45.72 \std{0.62}
& \underline{46.63 \std{0.40}} \\
\midrule
\multirow{2}{*}{TyDiQA (F1)}
& Llama-3.2-1B
& 12.69 \std{0.54}
& 42.47 \std{0.30}
& 47.01 \std{1.43}
& 47.21 \std{2.58}
& \underline{47.50 \std{0.56}}
& 45.35 \std{1.35}
& \textbf{48.86 \std{1.67}} \\
& Qwen3-0.6B
& 20.94 \std{0.68}
& 43.65 \std{0.61}
& 45.17 \std{0.97}
& 45.07 \std{0.95}
& \underline{45.25 \std{0.43}}
& 44.23 \std{1.33}
& \textbf{46.29 \std{0.62}} \\
\bottomrule
\end{tabular}
}
\end{table*}

\begin{table*}[t]
\centering
\small
\setlength{\tabcolsep}{6pt}
\caption{\textbf{Fixed-budget performance.} All methods evaluated 
at the final checkpoint of the $5\%$ data budget, formatting as in Table~\ref{tab:best_of_run}.}
\label{tab:fixed_budget}
\resizebox{\textwidth}{!}{
\begin{tabular}{ll|cc|ccc|c|c}
\toprule
& & \multicolumn{2}{c|}{\emph{References}} & \multicolumn{3}{c|}{\emph{Selection only}} & \emph{Coupled} & \emph{Ours} \\
Dataset & Backbone & Origin & Full & TracIn & LESS & GREATS & GRAD-MATCH & Filter-then-Weight \\
\midrule
\multirow{2}{*}{MMLU (Acc)}
& Llama-3.2-1B
& 28.13 \std{0.00}
& 28.67 \std{0.63}
& 29.04 \std{0.04}
& \underline{29.67 \std{1.31}}
& 29.59 \std{0.40}
& 28.77 \std{0.56}
& \textbf{30.93 \std{0.03}} \\
& Qwen3-0.6B
& 45.86 \std{0.00}
& \textbf{46.59 \std{0.10}}
& 46.16 \std{0.07}
& 45.98 \std{0.37}
& 46.06 \std{0.42}
& 44.98 \std{0.72}
& \underline{46.23 \std{0.67}} \\
\midrule
\multirow{2}{*}{TyDiQA (F1)}
& Llama-3.2-1B
& 12.69 \std{0.54}
& 38.68 \std{1.36}
& 47.01 \std{1.34}
& 46.88 \std{0.91}
& \underline{47.16 \std{1.08}}
& 44.15 \std{0.85}
& \textbf{48.67 \std{2.68}} \\
& Qwen3-0.6B
& 20.94 \std{0.68}
& 41.14 \std{0.64}
& 43.07 \std{0.97}
& \underline{43.57 \std{0.28}}
& 42.70 \std{0.92}
& 43.14 \std{1.13}
& \textbf{44.82 \std{1.00}} \\
\bottomrule
\end{tabular}
}
\vspace{-2mm}
\end{table*}

\squeezeup
\subsection{Experimental Setup}
\squeezeup
We construct the training corpus using the Open-Instruct 
framework~\citep{lambert2024tulu3}, which aggregates several widely 
used instruction-tuning datasets including FLAN 
v2~\citep{longpre2023flan}, 
Chain-of-Thought~\citep{wei2022chain}, 
Dolly~\citep{conover2023dolly}, and 
OASST1~\citep{kopf2023openassistant}, yielding approximately 270k 
instruction-response pairs in total. To prevent evaluation leakage, 
we remove any training samples overlapping with the target 
benchmarks. We evaluate on two benchmarks reflecting complementary 
aspects of instruction-following: \textbf{MMLU}~\citep{hendrycksmeasuring}, 
a broad knowledge benchmark covering 57 academic subjects, on which 
we report 5-shot accuracy averaged across all tasks; and 
\textbf{TyDiQA}~\citep{clark2020tydi}, a multilingual 
question-answering benchmark covering 11 typologically diverse 
languages, on which we report 1-shot macro-averaged F1.

Experiments use two instruction-tunable backbones at different 
scales: Llama-3.2-1B and Qwen3-0.6B. All selection-based methods 
are constrained to a strict $5\%$ data budget. At each step, we 
draw a batch of size $b_{tr}$ from a candidate pool of size 
$B_{tr} = \alpha b_{tr}$ with oversampling factor $\alpha = 4$; 
validation gradients are estimated analogously from a pool of size 
$B_{val} = \alpha b_{val}$. Due to GPU memory constraints, batch 
configurations differ across backbones: $b_{tr}{=}8, b_{val}{=}4$ 
for Llama-3.2-1B and $b_{tr}{=}6, b_{val}{=}2$ for Qwen3-0.6B. 
Models are evaluated every 200 steps, and training terminates once 
the cumulative number of processed samples reaches the $5\%$ 
budget. Additional implementation details are in 
Appendix~\ref{subsec: setting}.

% \subsection{Baselines}
% We compare our method against representative baselines spanning both full-data training and gradient-based data selection strategies. Notably, baseline methods are designed only for data selection, except that the third method GRAD-MATCH~\citep{killamsetty2021grad} is built with weighting mechanism.
% \textbf{(i) Origin:}
% The pretrained backbone evaluated directly without instruction tuning, representing the zero-shot baseline.
% \textbf{(ii) Full Data:}
% Standard supervised fine-tuning using all available samples. 
% This baseline represents the upper bound in terms of data usage but requires significantly higher computational cost.
% \textbf{(iii) GRAD-MATCH}~\citep{killamsetty2021grad}:
% An online subset selection method that jointly performs sample selection and weighting using Orthogonal Matching Pursuit (OMP).
% \textbf{(iv) TracIn} ~\citep{TracIn}:
% An influence-based method estimating sample importance via gradient similarity between training and validation examples. 
% We adapt it to the online setting by computing gradients on-the-fly without storing checkpoints.
% \textbf{(v) LESS}~\citep{xia2024less}:
% A gradient embedding matching method that incorporates Adam optimizer statistics for gradient preconditioning.
% We adapt it to the online setting by removing checkpoint-based gradient accumulation.
% \textbf{(vi) GREATS}~\citep{wang2024greats}:
% A greedy online selection method that explicitly models intra-batch interactions among selected samples.
% ep, explicitly modeling intra-batch interactions.
\squeezeup
\subsection{Baselines}
\squeezeup
We compare against representative baselines spanning full-data 
training and gradient-based selection strategies. Among them, only 
GRAD-MATCH performs both selection and weighting; the others apply 
uniform weights to selected samples.
\textbf{(i) Origin.} The pretrained backbone evaluated directly 
without instruction tuning, representing the zero-shot lower bound. 
\textbf{(ii) Full Data.} Standard supervised fine-tuning on all 
available samples, representing the full-budget upper bound on data 
usage. \textbf{(iii) TracIn}~\citep{TracIn}. An influence-based 
method scoring each sample by gradient similarity to the validation 
set; we adapt it online by computing gradients on-the-fly without 
checkpoint storage. \textbf{(iv) LESS}~\citep{xia2024less}. 
A gradient-embedding matching method incorporating Adam optimizer 
statistics; we adapt it online by removing checkpoint-based 
gradient accumulation. \textbf{(v) GREATS}~\citep{wang2024greats}. 
A greedy online selector that explicitly models intra-batch 
interactions among selected samples. \textbf{(vi) GRAD-MATCH}~\citep{killamsetty2021grad}.
A coupled selection-and-weighting method using Orthogonal Matching 
Pursuit (OMP), serving as the closest prior baseline to our 
formulation.

\subsection{Main Results}
\label{subsec: main}

Tables~\ref{tab:best_of_run} and~\ref{tab:fixed_budget} report 
results under two complementary evaluation protocols. 
\textbf{Best-of-run evaluation} (Table~\ref{tab:best_of_run}) 
reports the best checkpoint per seed, measuring peak achievable 
performance. \textbf{Fixed-budget evaluation} 
(Table~\ref{tab:fixed_budget}) reports the final checkpoint at the 
$5\%$ data budget, providing a fair comparison under identical 
training resources. Filter-then-Weight achieves the strongest 
results on both backbones across both metrics on TyDiQA and on 
Llama-3.2-1B for MMLU; on Qwen3-0.6B/MMLU it is competitive with 
the strongest baseline. We discuss the comparisons by baseline 
category below.

\textbf{Versus selection-only methods.}
TracIn and LESS rely on individual gradient-similarity scoring 
without modeling sample interactions; GREATS partially addresses 
this through a greedy intra-batch correction. None of the three, 
however, refines the weights of selected samples. Adding our 
subsequent NNLS reweighting stage allows the resulting batch 
gradient to better approximate the target direction, which 
empirically translates into consistent gains across most 
configurations, with the largest margins on TyDiQA. This is 
consistent with the prediction in 
Section~\ref{sec:method:utility} that additive scoring 
underexploits the candidate pool when sample interactions matter.

\textbf{Versus coupled selection-weighting.}
GRAD-MATCH is the only baseline that also performs reweighting; it 
does so by re-solving the weights at every greedy step (OMP). On 
LLM gradients we observe this coupled formulation to be markedly 
unstable: GRAD-MATCH consistently underperforms even the 
selection-only baselines on TyDiQA, despite using a more 
expressive weighting mechanism. Our decoupled Filter-then-Weight 
recovers the benefit of reweighting without the instability, 
directly validating the design choice analyzed in 
Section~\ref{sec:method:ftw}: separating noisy filtering from 
precise NNLS weighting exposes each stage to the noise level it 
can handle.

\textbf{Versus full-data training.}
Several selection methods, including ours, outperform full-data 
training despite using only $5\%$ of the corpus, confirming the 
value of removing noisy or off-target samples. The exception is 
Qwen3-0.6B/MMLU, where Full remains competitive: both Open-Instruct 
and MMLU are general-purpose corpora, so the candidate pool 
already aligns well with the target distribution and aggressive 
subset selection offers little marginal benefit.

\textbf{Data efficiency.}
To probe the regime where selection matters most, 
Figure~\ref{fig:data_ratio_f1} plots TyDiQA F1 as a function of the cumulative training data ratio up to the $5\%$ budget. All selection methods substantially outperform Origin from early 
in training, indicating that a small set of informative instruction  examples is sufficient for non-trivial improvement. Filter-then-Weight maintains the lead across most ratios, with the 
gap widening at smaller budgets—precisely the regime where 
informative-sample selection matters most.

\begin{figure}[t]
    \centering
    \begin{minipage}[t]{0.4\columnwidth}
        \centering
        \includegraphics[width=\linewidth]{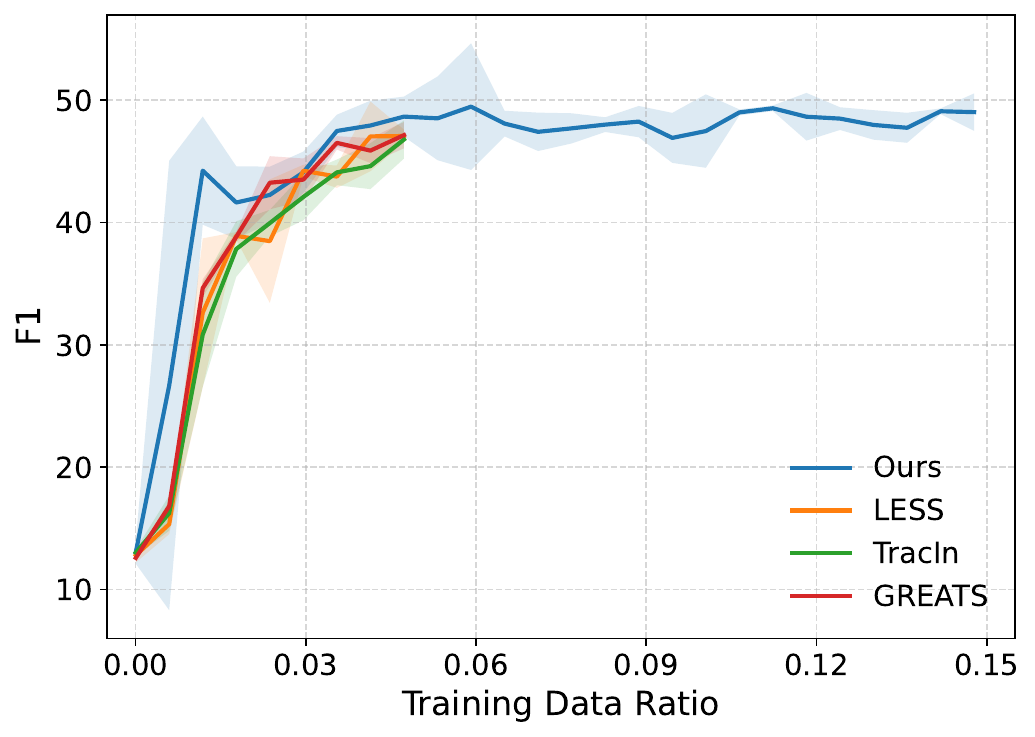}
        \vspace{-2mm}
        \caption{TyDiQA performance (F1) as a function of the training data ratio.}
        \label{fig:data_ratio_f1}
    \end{minipage}
    \hfill
    \begin{minipage}[t]{0.55\columnwidth}
        \centering
        \includegraphics[width=\linewidth]{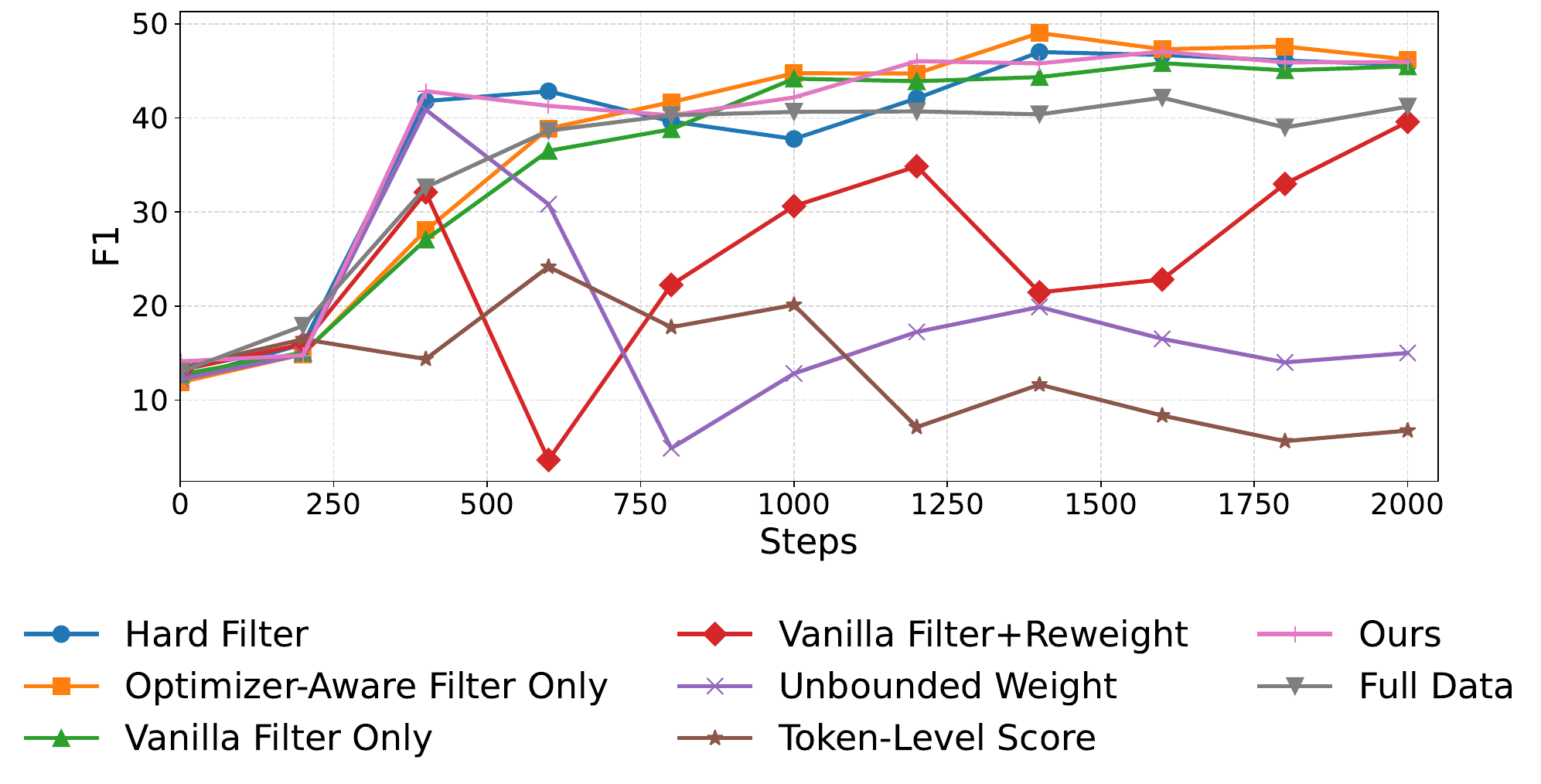}
        \vspace{-2mm}
        \caption{Preliminary ablation (single seed) used to prune the design space of our proposed framework.}
        \label{fig:ablation_f1}
    \end{minipage}
    \vspace{-3mm}
\end{figure}
\squeezeup

\begin{table}[t]
\vspace{-3mm}
\centering
\small
\begin{minipage}[t]{0.5\columnwidth}
\centering
\caption{Multi-seed ablation results on the selected variants (mean \std {std}).}
\label{tab:ablation_studies}
\setlength{\tabcolsep}{4pt}
\resizebox{\linewidth}{!}{%
\begin{tabular}{lcc}
\toprule
\textbf{Method} & \textbf{Best F1 $\leq$2000} & \textbf{F1 @2000} \\
\midrule
1. Hard Filter+Reweight
& \underline{48.72 \std {1.86}}
& 45.08 \std {1.17} \\
2. Optimizer-Aware Filter Only
& 47.21 \std {2.58}
& 46.88 \std {0.91} \\
3. Vanilla Filter Only
& 47.01 \std {1.43}
& \underline{47.01 \std {1.43}} \\
4. Vanilla Filter+Reweight
& 46.39 \std {6.13}
& 45.69 \std {5.34} \\
\midrule
Ours
& \textbf{48.86 \std {1.67}}
& \textbf{48.67 \std {2.68}} \\
\bottomrule
\end{tabular}
}
\end{minipage}
\hfill
\begin{minipage}[t]{0.35\columnwidth}
\centering
\caption{Sensitivity to the random projection dimension $d$ on MMLU dataset.}
\label{tab:param_dim_mean_std}
\setlength{\tabcolsep}{5pt}
\resizebox{\linewidth}{!}{%
\begin{tabular}{c|cc}
\toprule
$d$ & Best Acc. $\le$1600 & Acc. @1600 \\
\midrule
32  & 30.04 \std {0.20} & 29.66 \std {0.75} \\
64  & 30.21 \std {0.49} & 28.36 \std {0.57} \\
128 & 30.17 \std {0.26} & 28.67 \std {0.39} \\
256 & \textbf{31.17 \std{0.13}} & \textbf{30.16 \std {1.56}} \\
\bottomrule
\end{tabular}
}
\end{minipage}
\vspace{-5mm}
\end{table}

\subsection{Ablation Studies and Hyperparameter Analysis}
\label{sec:ablation}

We perform mechanistic ablations on Llama-3.2-1B/TyDiQA to validate 
the contribution of individual design choices in 
Filter-then-Weight. The ablations span filter mechanism (greedy 
vs.\ top-$k$), presence of the reweighting stage, optimizer-aware 
preconditioning, weight constraints (non-negative vs.\ unbounded), 
and gradient aggregation granularity (sequence vs.\ token level). 
Figure~\ref{fig:ablation_f1} shows preliminary single-seed runs 
across all variants: \textit{Unbounded Weights} and 
\textit{Token-Level Score} diverge or collapse and are excluded 
from the multi-seed comparison. We discuss them first, then turn 
to the four well-behaved variants reported in 
Table~\ref{tab:ablation_studies}; full variant definitions are in 
Appendix~\ref{app:ablation_variants}.

\textbf{Failure modes confirm two design constraints.}
\textit{Unbounded Weights}, which lifts the non-negative constraint 
in Eq.~\ref{eq:matching}, diverges within the first few 
hundred steps. The unconstrained solver exploits gradient 
co-linearity by subtracting opposing vectors—forcing the optimizer 
to step away from certain training samples—which destabilizes 
fine-tuning, validating the analysis in 
Section~\ref{sec:method:utility} and the necessity of the NNLS 
solver. \textit{Token-Level Score}, which restricts gradient 
similarity to identical token positions, also collapses, confirming 
that gradient aggregation must respect sequence-level semantic 
structure rather than per-position alignment.

\textbf{Reweighting requires optimizer-aware preconditioning.}
Comparing \emph{Vanilla Filter Only} and \emph{Vanilla 
Filter+Reweight} in Table~\ref{tab:ablation_studies} isolates the 
effect of NNLS reweighting under raw (un-preconditioned) gradients. 
Surprisingly, adding the reweighting stage on raw gradients makes 
performance worse: the weight solver overfits to 
high-variance directions misaligned with the optimizer's actual 
update geometry. Once the alignment vector is preconditioned 
(Ours), reweighting becomes beneficial. This contrast directly 
supports the asymmetric preconditioning design in 
Section~\ref{sec:target_precond}: the alignment side requires 
optimizer geometry to expose the correct descent direction, without 
which the additional capacity of the weighting stage becomes 
counterproductive.

\textbf{The reweighting stage itself adds non-trivial value.}
\emph{Optimizer-Aware Filter Only} shares our preconditioning and 
filtering but omits the NNLS stage, so comparing it to Ours isolates 
the marginal contribution of subset-level reweighting. The gap 
confirms that even with an optimizer-aware filter, refining 
coefficients to form a precise composite update is necessary for 
peak performance, in line with the subset-level matching argument 
in Section~\ref{sec:method:utility}.

\textbf{Decoupling matters: greedy filter outperforms top-$k$ over 
the trajectory.}
\emph{Hard Filter+Reweight} and Ours both employ NNLS reweighting 
and differ only in the filter mechanism (top-$k$ vs.\ greedy). Both 
achieve comparable \emph{peak} F1, but top-$k$ degrades by the 
final checkpoint while greedy maintains its performance. Top-$k$ 
admits redundant samples that the NNLS solver must subsequently 
down-weight to zero, wasting batch capacity; greedy filtering 
ensures geometric diversity at admission time. Together with the 
GRAD-MATCH comparison in Section~\ref{subsec: main}, this confirms 
the design analyzed in Section~\ref{sec:method:ftw}: it is the 
\emph{decoupling} of diverse filtering from precise weighting—not 
any one of these components in isolation—that drives the gain.

\textbf{Sensitivity to projection dimension.}
Table~\ref{tab:param_dim_mean_std} reports MMLU accuracy as the 
random projection dimension $d$ varies. Both peak and late-stage 
accuracy improve monotonically with $d$, with $d{=}256$ giving the 
strongest and most stable results across seeds.
\squeezeup
\section{Conclusion}
\label{sec: conclusion}

We have introduced \emph{Filter-then-Weight}, a framework for online data selection and reweighting in LLM fine-tuning that reformulates the problem as subset-level gradient matching with non-negative weights. We model batch utility as a 
\emph{composite} update whose alignment with the target depends jointly on which samples are selected and how they are weighted, and decouple this into a robust filtering stage and a precise weighting stage—a design choice that, our ablations show, is itself the principal driver of gains on noisy LLM gradients. To keep the framework practical for long context, we further develop a factorized rank-1 gradient representation with sequence-length-aware reordering. Across different settings of models and datasets, Filter-then-Weight consistently improves over leading online selection baselines under the same data budget. 

\textbf{Limitations.} Natural extensions of this paper include 
reusing a bounded buffer of historical gradient trajectories 
across steps. This remains within the online constraint while 
trading memory and time for richer training dynamics, 
analogous to the common practices in offline 
setting~\citep{xia2024less,deng2024influential}. Another direction 
is to explore richer second-order surrogates, as detailed in our analysis (Appendix~\ref{[proof]: eq of 2 order}).

\newpage

% ============================
% Acknowledgments
% ============================

% \begin{ack}
% \end{ack}

% ============================
% Bibliography
% ============================

\bibliographystyle{plainnat}
\bibliography{ref}

% ============================
% Appendix
% ============================

\newpage
\appendix

\section{Algorithm Details}

\subsection{Orthogonal Matching Pursuit (OMP)}
\label{section: omp}

\begin{algorithm}[b]
\caption{\texttt{OMP-Select Sub-routine}}
\label{alg: omp sub}
\begin{flushleft}
    \textbf{Input:} Candidate set $\mathcal{S}^{(t)}$; Gram matrix 
    $\mathbf{G}$; alignment vector $\mathbf{b}$; gradients 
    $\{\nabla\ell_i\}$; preconditioned target 
    $\nabla\tilde\ell_{\mathrm{val}}$; budget $B_{tr}$; ridge 
    $\lambda$.\\
    \textbf{Output:} Selected set $\mathcal{S}'$, weights 
    $\mathbf{w}$.
\end{flushleft}
\begin{algorithmic}[1]
    \STATE Initialize selected set $\mathcal{S}' \leftarrow \emptyset$, 
           weights $\mathbf{w} \leftarrow \emptyset$.
    \FOR{$k = 1$ to $B_{tr}$}
        \STATE Initialize best candidate $u^*$, 
               min error $\varepsilon^* \leftarrow \infty$.
        \FOR{each candidate $u \in \mathcal{S}^{(t)} \setminus \mathcal{S}'$}
            \STATE Form trial set $\mathcal{S}_{\mathrm{trial}} 
                   \leftarrow \mathcal{S}' \cup \{u\}$.
            \STATE Solve weights: 
                   $\mathbf{w}^*_{\mathrm{trial}} \leftarrow 
                   (\mathbf{G}_{\mathcal{S}_{\mathrm{trial}}} 
                   + \lambda \mathbf{I})^{-1} 
                   \mathbf{b}_{\mathcal{S}_{\mathrm{trial}}}$.
            \STATE Compute objective error: 
                   $\mathcal{L} \leftarrow 
                   \| \nabla\tilde\ell_{\mathrm{val}} 
                   - \sum_{m \in \mathcal{S}_{\mathrm{trial}}} 
                     w^*_m \nabla\ell_m \|^2 
                   + \lambda \|\mathbf{w}^*_{\mathrm{trial}}\|^2$.
            \IF{$\mathcal{L} < \varepsilon^*$}
                \STATE $u^* \leftarrow u$, 
                       $\varepsilon^* \leftarrow \mathcal{L}$, 
                       $\mathbf{w} \leftarrow \mathbf{w}^*_{\mathrm{trial}}$.
            \ENDIF
        \ENDFOR
        \STATE $\mathcal{S}' \leftarrow \mathcal{S}' \cup \{u^*\}$.
    \ENDFOR
    \STATE \textbf{return} $\mathcal{S}', \mathbf{w}$
\end{algorithmic}
\end{algorithm}

This appendix details the standard Orthogonal Matching Pursuit 
(OMP) sub-routine and its relation to the simpler greedy filter 
used in our main Algorithm~\ref{alg: ft w. ds}.

\textbf{OMP and the precomputed objective.}
OMP is a greedy algorithm that iteratively selects samples to 
minimize the residual error between the current weighted 
combination and the target vector. To make iterative selection 
tractable, we expand the squared $L_2$ objective in 
Eq.~\ref{eq:matching}:
\begin{equation}
\label{eq:expansion}
\mathcal{L}(\mathbf{w}, \mathcal{S}) 
= \| \nabla\tilde\ell_{\mathrm{val}} \|_2^2 
- 2 \sum_{i \in \mathcal{S}} w_i\, b_i 
+ \sum_{i \in \mathcal{S}} \sum_{j \in \mathcal{S}} 
  w_i w_j\, G_{ij} 
+ \lambda \|\mathbf{w}\|_2^2,
\end{equation}
where $b_i$ and $G_{ij}$ are the alignment-vector and Gram-matrix 
entries defined in Section~\ref{sec:method:ftw}. Both 
$\mathbf{b} \in \mathbb{R}^{|\mathcal{S}^{(t)}|}$ and 
$\mathbf{G} \in \mathbb{R}^{|\mathcal{S}^{(t)}| \times 
|\mathcal{S}^{(t)}|}$ are computed once per step on the projected 
low-rank gradients (Section~\ref{sec:efficient_gradient}); 
preconditioning is absorbed entirely into $\mathbf{b}$ via the 
target-side rescaling in Eq.~\ref{eq:adjoint_transfer}.

\textbf{The OMP sub-routine.}
Algorithm~\ref{alg: omp sub} details the selection process. At 
each iteration, we search for a candidate $u$ that, when added to 
the current set $\mathcal{S}'$, yields the minimal objective 
value. For a fixed trial set $\mathcal{S}_{\mathrm{trial}} = 
\mathcal{S}' \cup \{u\}$, the optimal weights have a closed-form 
ridge solution:
\begin{equation}
\mathbf{w}^*_{\mathrm{trial}} 
= (\mathbf{G}_{\mathcal{S}_{\mathrm{trial}}} + \lambda \mathbf{I})^{-1} 
  \mathbf{b}_{\mathcal{S}_{\mathrm{trial}}}.
\end{equation}
Since the target batch size $B_{tr}$ is small in the online 
setting, solving this system is negligible relative to gradient 
computation.

\textbf{Why our main algorithm uses a simplified variant.}
In the main Filter-then-Weight procedure 
(Algorithm~\ref{alg: ft w. ds}, Stage 1), we adopt a simplified 
version of OMP that skips the inner ridge solve when each new 
sample is added, retaining only the residual update 
$r \leftarrow r - \nabla\ell_{i^*}$. Solving for optimal weights 
at every greedy step, as standard OMP does, attempts to construct 
a linear combination that perfectly fits the noisy validation 
gradient—often selecting samples that cancel out random 
fluctuations in the validation batch rather than capturing the 
general descent direction. Our two-stage decoupling instead 
separates directional agreement (Stage 1, unit weights) from 
precise weight assignment (Stage 2, NNLS over the fixed set), 
reducing the noise propagated through greedy selection.

\subsection{Pseudo-codes of Filter-then-Weight}
\label{section: our pseudocodes}

Algorithm~\ref{alg: ft w. ds} summarizes one iteration of 
Filter-then-Weight. Each step proceeds in four stages: 
gradient acquisition prepares projected per-sample factors for both 
the candidate and validation pools (Stage~0); preconditioning and 
greedy filtering identify a geometrically diverse subset 
$\mathcal{S}'$ under unit weights, with the residual update 
$r \leftarrow r - \nabla\ell_{i^*}$ enforcing diminishing returns 
across selected samples (Stage~1); NNLS reweighting then refines 
the coefficients $\mathbf{w}^*$ over the fixed set 
$\mathcal{S}'$ to form a precise composite gradient (Stage~2); 
and the optimizer applies its update rule $\mathcal{P}_t$ to this 
composite gradient (Stage~3). The two-stage structure ensures 
filtering and weighting each operate at the noise level they can 
handle, as discussed in Section~\ref{sec:method:ftw}.

\begin{algorithm}[!t]
\small 
\caption{\texttt{Filter-then-Weight at Step $t$}}
\label{alg: ft w. ds}
\begin{algorithmic}[1]
    \REQUIRE Model $\theta_{t-1}$, Adam state $\hat{v}_{t-1}$; 
             candidate pool $\mathcal{S}^{(t)}$, validation pool 
             $\mathcal{S}_{\mathrm{val}}^{(t)}$; budget 
             $B_{tr}$, ridge $\lambda$, learning rate $\eta_t$.
    \ENSURE Updated model $\theta_t$.
    \STATE \textbf{Stage 0: Gradient Acquisition.} 
    \STATE Run forward/backward on $\mathcal{S}^{(t)}$ and 
           $\mathcal{S}_{\mathrm{val}}^{(t)}$.
    \STATE Store projected activations and gradients 
           $\{a_i, g_i\}_{i \in \mathcal{S}^{(t)}}$, 
           $\{a_j, g_j\}_{j \in \mathcal{S}_{\mathrm{val}}^{(t)}}$.
    
    \STATE \textbf{Stage 1: Preconditioning and Greedy Filtering.} 
    \STATE Compute averaged validation gradient 
           $\nabla\ell_{\mathrm{val}}$.
    \STATE Compute preconditioned target 
           $\nabla\tilde\ell_{\mathrm{val}} \leftarrow 
           \mathbf{D}_{t-1} \odot \nabla\ell_{\mathrm{val}}$ 
           using $\hat{v}_{t-1}$.
    \STATE Initialize $\mathcal{S}' \leftarrow \emptyset$, 
           residual $r \leftarrow \nabla\tilde\ell_{\mathrm{val}}$.
    \WHILE{$|\mathcal{S}'| < B_{tr}$}
        \STATE $i^* \leftarrow \arg\max_{i \notin \mathcal{S}'} 
               \langle \nabla\ell_i, r \rangle$.
        \STATE $\mathcal{S}' \leftarrow \mathcal{S}' \cup \{i^*\}$.
        \STATE $r \leftarrow r - \nabla\ell_{i^*}$.
    \ENDWHILE
    
    \STATE \textbf{Stage 2: NNLS Reweighting.} 
    \STATE Form Gram matrix $\mathbf{G}_{\mathcal{S}'}$ and 
           alignment vector $\mathbf{b}_{\mathcal{S}'}$ on 
           $\mathcal{S}'$.
    \STATE Solve $\mathbf{w}^*$ in Eq.~\ref{eq:matching} 
           via NNLS.
    
    \STATE \textbf{Stage 3: Model Update.} 
    \STATE Composite gradient: 
           $\bar\nabla \ell^{(t)} \leftarrow 
           \sum_{i \in \mathcal{S}'} w_i^* \nabla\ell_i$.
    \STATE $\theta_t \leftarrow \theta_{t-1} - \eta_t\, 
           \mathcal{P}_t(\bar\nabla \ell^{(t)})$.
\end{algorithmic}
\end{algorithm}

\section{Proofs}
\label{sec: proofs}
\subsection{Shapley value Using First-order Taylor Expansion}

We First discuss the first-order Taylor expansion as an approximation to the utility change after selecting and reweighting the training samples, such that $ U(S^{(t)} ; x_j) = l( \tilde{\theta_t}(S^{(t)} ), x_j) - l( \theta_{t-1}, x_j) \approx -\eta_t \nabla l_j (\tilde{\theta_t}(S^{(t)} ) - \theta_{t-1})$.

Importantly, our two-stage algorithm \textbf{separates the weighting stage from the Shapley value's subset selection}. This means a sample's weight is fixed and does not change depending on which subset it appears in. Because weights are assigned only after the batch is selected, they are independent of the probability of inclusion. The weights therefore only affect the scale of the gradient update, not the sampling process itself.

Using the definition for optimizer and weighting in Equation \ref{eq: taylor}, we derive 

\begin{equation}
\label{eq: optimizer_n_weight update}
U(S^{(t)} ; x_j) = -\eta_t \nabla l_j \mathcal{P}_t ( g (w) ) = -\eta_t \nabla l_j \mathcal{P}_t ( \sum_{x_i \in S^{(t)} } w_i \nabla l_i).
\end{equation}

Recall the Shapley value definition:

\begin{equation}
    \label{eq: shapley}
    \begin{split}
    \phi_i( U^{(t)}(x_i ;x_j) ) = \sum_{s=1}^{|S^{(t)}|} {|S^{(t)}|-1 \brack s -1}^{-1} \sum_{S \subseteq S^{(t)} \backslash x_i, |S| = s-1} [U^{(t)}(S \cup x_i ; x_j) - U^{(t)}(S ; x_j)].
    \end{split}
\end{equation}

Substituting Equation \ref{eq: optimizer_n_weight update} into the definition, and consider linearized $\mathcal{P}_t$, we get 

\begin{equation}
    \label{eq: shapley_first_order}
    \begin{split}
    \phi_i( U^{(t)}(x_i ;x_j) ) & = \nabla l_j \sum_{s=1}^{|S^{(t)}|} {|S^{(t)}|-1 \brack s -1}^{-1} \sum_{S \subseteq S^{(t)} \backslash x_i, |S| = s-1} [  \mathcal{P}_t ( \sum_{x_i \in S \cup x_i } w_i \nabla l_i) -  \mathcal{P}_t ( \sum_{x_i \in S } w_i \nabla l_i)] \\
    & = -\eta_t \nabla l_j \mathcal{P}_t (w_i \nabla l_i).
    \end{split}
\end{equation}

For 0-1 data selection problem, the utility of selecting sample $x_j$, $\phi_i( U^{(t)}(x_i ;x_j) ) = \nabla l_j \mathcal{P}_t (\nabla l_i)$; for weighting problem, $\phi_i( U^{(t)}(x_i ;x_j) ) = -\eta_t w_i \nabla l_j \mathcal{P}_t (\nabla l_i)$.

\subsection{Shapley value Using Second-order Taylor Expansion}
\label{[proof]:shapley}

Next, we discuss the second-order Taylor expansion at $\theta_{t-1}$ as an approximation, $ U(S^{(t)} ; x_j) = l( \tilde{\theta_t}(S^{(t)} ), x_j) - l( \theta_{t-1}, x_j) \approx \nabla l (\theta_{t-1}, x_j) (\tilde{\theta_t}(S^{(t)} ) - \theta_{t-1}) + \frac{1}{2} ( \tilde{\theta_t}(S^{(t)}) - \theta_{t-1})^T H^{(t)}_j ( \tilde{\theta_t}(S^{(t)}) - \theta_{t-1})$, where the Hessian matrix $H^{(t)}_j := \nabla^2 l_j$. Notably, the utility of the total set is not a linear sum of all individual samples, due to the presence of second-order Hessian matrix. In the rest of this paper, we substitute $H^{(t)}_j$ for $H_j$ for simplicity.

Similarly, updating with Equation \ref{eq: optimizer_n_weight update}, we draw connection between $\tilde{\theta_t}(S^{(t)})$ and $\theta_{t-1}$, i.e., $\tilde{\theta_t}(S^{(t)}) - \theta_{t-1} = - \eta_t \mathcal{P}_t( \sum_{z_i \in S^{(t)} } w_i \nabla l_i )$. Substituting into the original utility function, for training sample $x_i \in S^{(t)}$ and validation sample $x_j \in D_{val}$, we get candidate set utility 
\begin{equation}
\label{eq: 2-order Taylor}
    \begin{split}
        U(S^{(t)} ; x_j)  &= - \eta_t \nabla l_j \mathcal{P}_t( \sum_{z_i \in S^{(t)} } w_i \nabla l_i ) 
         + \frac{ \eta_t^2 }{2} \mathcal{P}_t( \sum_{z_i \in S^{(t)} } w_i \nabla l_i ) H_j \mathcal{P}_t( \sum_{z_k \in S^{(t)} } w_k \nabla l_k ), \\&\text{ where } \space H_j = \nabla^2 l_j.
    \end{split}
\end{equation}
We take the last item as an approximation of candidate set utility, denoted as $\hat{U}^{(t)}(S^{(t)} ; x_j)$. This expression can be derived from only the step size $\eta_t$ and the individual gradients of current model $\theta_{t-1}$ to involved training and validation samples, i.e., $\nabla l_i$ and $\nabla l_j$. 

% Though skipping requirement for actual model parameters $\tilde{\theta_t}(S^{(t)})$, the computation cost is barely reduced, which mainly lies in the gradient calculation for every possible combinations. Instead of computing all combinations, we use game-theoretic notion to fairly allocate utility of set to each sample, even when the training samples are jointly dependent.
% Following \cite{wang2024data}, $\phi_i( U(\cdot ;x_j) )$, the individual contribution of sample $x_i$ at step $t$ is computed with classic Shapley value~\citep{shapley1953value}, where utility is evaluated with validation point $x_j$. When the actual training batch $S^{(t)}$ is fixed, we can use closed-form formula to calculate the value of each point. 

Recall that $S^{(t)}$ is the final training data batch selected at step $t$.
For a sample $x_i$, denote $S \subseteq S^{(t)} \backslash x_i$ as any combinations without $x_i$.
Similarly, we consider only linearized $\mathcal{P}_t$.
By Equation \ref{eq: 2-order Taylor}, the marginal contribution of adding a new sample into the current candidate batch $S$ can be derived as

\begin{equation}
    \label{eq: margin}
    \begin{split}
    &U^{(t)}(S \cup x_i ; x_j) - U^{(t)}(S ; x_j) \\
    =& - \eta_t  \Bigl( \nabla l_j \mathcal{P}_t( \sum_{z_i \in S  \cup x_i} w_i\nabla l_i )  -  \nabla l_j \mathcal{P}_t( \sum_{z_i \in S } w_i \nabla l_i ) \Bigr)  \\
         &+ \frac{ \eta_t^2 }{2} \Bigl( \mathcal{P}_t( \sum_{z_i \in  S  \cup x_i } w_i\nabla l_i ) H_j \mathcal{P}_t( \sum_{z_k \in   S  \cup x_i } w_k \nabla l_k ) - \mathcal{P}_t( \sum_{z_i \in  S  } w_i \nabla l_i ) H_j \mathcal{P}_t( \sum_{z_k \in   S  } w_k \nabla l_k ) \Bigl)\\
    = & -\eta_t \mathcal{P}_t( w_i \nabla l_i) \nabla l_j + \frac{\eta^2_t}{2} \Bigl( 2 \mathcal{P}_t( w_i \nabla l_i) H_j   \mathcal{P}_t ( \sum_{x_k \in S} w_k \nabla l_k) + \mathcal{P}_t (w_i \nabla l_i) H_j \mathcal{P}_t (w_i \nabla l_i) \Bigr).
    \end{split}
\end{equation}

For the formula of Shapley value in Equation \ref{eq: shapley}, only the second item in Equation \ref{eq: margin} is variant cross different $S$. Substituting it into Equation $\ref{eq: shapley}$, we get $ \eta^2_t \mathcal{P}_t(w_i \nabla l_i) H^{(t)}_j \mathcal{P}_t ( \sum_{x_k \in S} w_k \nabla l_k) $. Based on the linearity of $\mathcal{P}_t$ and the symmetry property of Shapley values, the expectation of the sum term over all subsets $S$ is half the sum of the remaining samples: 

$$\mathbb{E}_S \left[ \sum_{x_k \in S} \nabla w_k l_k \right] = \frac{1}{2} \sum_{x_k \in S^{(t)} \setminus x_i} w_k \nabla l_k.$$

Substituting this back into Equation \ref{eq: shapley} and combining it with the invariant terms from Equation \ref{eq: margin}:

\begin{equation}
\begin{split}
\phi_i( U^{(t)}(x_i ;x_j) ) 
&= -\eta_t \mathcal{P}_t( w_i\nabla l_i) \nabla l_j + \frac{\eta^2_t}{2} \mathcal{P}_t (w_i \nabla l_i) H_j \mathcal{P}_t (w_i \nabla l_i) \\
&+ \eta^2_t \mathcal{P}_t(w_i \nabla l_i) H_j \mathcal{P}_t \left( \frac{1}{2} \sum_{x_k \in S^{(t)} \setminus x_i} w_k \nabla l_k \right) \\
&= -\eta_t \mathcal{P}_t( w_i \nabla l_i) \nabla l_j + \frac{\eta^2_t}{2} \mathcal{P}_t(w_i \nabla l_i) H_j \mathcal{P}_t \left( w_i \nabla l_i + \sum_{x_k \in S^{(t)} \setminus x_i} w_k \nabla l_k \right).
\end{split}
\end{equation}

Since $\nabla w_i l_i + \sum_{x_k \in S^{(t)} \setminus x_i} w_k \nabla l_k$ equals the sum over the entire batch $S^{(t)}$, we obtain the final expression:
\begin{equation}
\label{eq: final_shapley}
\phi_i( U^{(t)}(x_i ;x_j) ) = -\eta_t \mathcal{P}_t( w_i\nabla l_i) \nabla l_j + \frac{\eta^2_t}{2} \mathcal{P}_t( w_i \nabla l_i) H_j \mathcal{P}_t \left( \sum_{x_k \in S^{(t)}} w_k \nabla l_k \right).
\end{equation}

\subsection{Connection between Second-Order Contribution and Gradient Matching}
\label{appendix: connection}

Notably, even using approximation $H^{(t)}_j := \nabla^2 l_j$ as a parameter size matrix, the cost can be high for online computation. Therefore, previous work~\cite{wang2024greats} has simplified it further to an identity matrix $I$. Considering a validation batch $S_{val}$, utility of training sample combination $S^{(t)}$ using the second-order Taylor Expansion can be simplified as

\begin{equation}
\label{eq: simp_utility}
U(S^{(t)} ; x_j)  = - \eta_t \left( \mathcal{P}_t( \sum_{z_i \in S^{(t)} } w_i \nabla l_i ) \sum_{x_j \in S_{val}}\nabla l_j
         - \frac{ \eta_t }{2} \mathcal{P}_t( \sum_{z_i \in S^{(t)} } w_i \nabla l_i ) ^2 \right)
\end{equation}

Then, minimizing the total utility is equivalent to:

\begin{equation}
\label{eq:simp_utility_objective}
\min_{w}\; U(S^{(t)} ; x_j)  \equiv  \min_{w}\; -\mathcal{P}_t( \sum_{z_i \in S^{(t)} } w_i \nabla l_i ) \sum_{x_j \in S_{val}}\nabla l_j
         + \frac{ \eta_t }{2} \mathcal{P}_t( \sum_{z_i \in S^{(t)} } w_i \nabla l_i ) ^2 
\end{equation}

Meanwhile, expanding the gradient matching goal in Equation \ref{eq:matching}, we are optimizing over 

\begin{equation}
\label{eq:expanded_matching_objective}
\begin{split}
&\min_{w}\; 
\left\|
 \sum_{x_j \in S_{val}}\nabla l_j - \mathcal{P}_t
(\sum_{x_i\in S^{(t)}} w_i \nabla l_i)
\right\|_2^2
+\lambda\|w\|_2^2 \\
\equiv & \min_{w} - \mathcal{P}_t(\sum_{x_i\in S^{(t)}} w_i \nabla l_i) \sum_{x_j \in S_{val}}\nabla l_j 
+ \frac{1}{2} \mathcal{P}_t(
\sum_{x_i\in S^{(t)}} w_i \nabla l_i
) ^2 + \frac{1}{2} \lambda\|w\|_2^2.
\end{split}
\end{equation}

% Comparing Equation \ref{eq:simp_utility_objective} and \ref{eq:expanded_matching_objective}, we observe that the gradient matching objective shares the same first training-validation interaction item. Differently, the similarity between training sample is more punished, and an regularization item is added to prevent exploding weights.

Comparing Equation \ref{eq:simp_utility_objective} and 
\ref{eq:expanded_matching_objective}, both objectives share the 
same first-order alignment term 
$-\mathcal{P}_t(\sum_i w_i \nabla l_i) \sum_{x_j \in S_{val}} \nabla l_j$, 
which captures the training–validation interaction. The two 
objectives differ in three ways:

\begin{itemize}
\item \textbf{Quadratic term coefficient.} The second-order Taylor 
expansion weights $\mathcal{P}_t(\sum_i w_i \nabla l_i)^2$ by 
$\eta_t/2$, while gradient matching weights it by $1/2$. Since 
$\eta_t \sim 10^{-4}$ in fine-tuning, gradient matching amplifies 
this quadratic penalty by roughly four orders of magnitude. Rather 
than a flaw, this amplification acts as an \emph{implicit 
regularizer} on the update norm, discouraging the solver from 
constructing large composite gradients that may overfit a noisy 
validation direction.

\item \textbf{Pairwise interactions encoded in the quadratic term.} 
The quadratic term expands as $\mathcal{P}_t(\sum_i w_i \nabla l_i)^2 
= \sum_{i,j} w_i w_j \langle \mathcal{P}_t(\nabla l_i), 
\mathcal{P}_t(\nabla l_j) \rangle$, introducing pairwise 
interactions across selected samples. Highly correlated samples 
incur larger cross terms $w_i w_j \langle \cdot, \cdot \rangle$, 
naturally encoding diminishing returns and redundancy control. 
This is the structural property that motivates the subset-level 
matching formulation in Section~\ref{sec:method:utility}: pairwise 
structure is precisely what individual scoring discards.

\item \textbf{Ridge term.} The $\frac{\lambda}{2}\|w\|_2^2$ term 
in gradient matching has no analog in the second-order expansion. 
It serves a different role from the quadratic term: rather than 
controlling the geometry of $\mathcal{P}_t(\sum_i w_i \nabla l_i)$, 
it controls the mixture coefficients $w$ themselves, preventing 
degenerate solutions where a few samples receive arbitrarily 
large weights.
\end{itemize}

\textbf{Caveat: the isotropic-Hessian assumption.}
The connection above relies on $H_j \approx I$, a strong 
simplification at LLM scale where Hessian eigenspectra are highly 
anisotropic. A more accurate approximation such as 
K-FAC~\citep{martens2015optimizing} would replace 
$\mathcal{P}_t(\sum_i w_i \nabla l_i)^2$ with the Mahalanobis form 
$\mathcal{P}_t(\sum_i w_i \nabla l_i)^\top H_j 
\mathcal{P}_t(\sum_i w_i \nabla l_i)$, yielding pairwise terms 
$\langle \mathcal{P}_t(\nabla l_i), H_j \mathcal{P}_t(\nabla l_j) \rangle$ 
in the Gram matrix rather than the raw inner product. We derive 
the corresponding K-FAC-based scoring procedure and an efficient 
factorized computation in 
Appendix~\ref{[proof]: eq of 2 order}; in our preliminary 
experiments, however, the K-FAC variant did not yield improvements 
over the isotropic version on our benchmarks, and we leave a 
deeper investigation of richer Hessian approximations to future 
work.

\subsection{Inner-product and cosine maximization as squared matching}
\label{sec:inner_cosine}

% \sm{can you check the above paragraph? If you are happy with it, maybe we can remove the other text below and only keep the theorem - we can also send the theorem to A.2.4}
% \fx{I think its good to keep the current version and move the following rest to appendix. Also I will work on 3.5 later. PLZ go head and change 3.3 and 3.4}

The gradient matching design is a natural extension from the optimization goal defined in Equation \ref{eq: weighted objective}.
Formally, Theorem~\ref{thm:alignment_l2_unified} establishes that maximizing inner-product or cosine alignment is equivalent to minimizing a squared $\ell_2$ distance up to terms that depend only on the norm of the update.
In particular, when the optimizer-induced mapping $\mathcal{P}_t$ is linear (e.g., SGD) and the update norm is fixed or regularized, the two formulations admit the same set of optimal solutions.
Therefore, under these conditions, distance-based matching preserves the optimal solutions of the original alignment objective.

%When the optimizer-induced mapping $\mathcal{P}_t$
% is linear and the update norm is fixed or regularized, maximizing alignment and minimizing squared distance admit the same set of optimal solutions (Theorem 3.1). Full proofs and cosine variants are deferred to the appendix.

When $\mathcal{P}_t$ is nonlinear and optimizer-dependent (e.g., Adam/AdamW), strict equivalence between the two objectives no longer holds.
Nevertheless, minimizing the distance in the optimizer-induced feature space can be viewed as matching the effective update directions seen by the optimizer.
% This yields an optimizer-aware notion of subset utility that generalizes alignment maximization while retaining a meaningful geometric interpretation, making greedy selection a principled approximation in practice.

\begin{theorem}[Inner-product and cosine maximization as squared $\ell_2$ matching]
\label{thm:alignment_l2_unified}
Let $v \in \mathbb{R}^d$ be fixed, and let $h:\mathcal{W}\to\mathbb{R}^d$ be any mapping from a feasible set $\mathcal{W}$ to $\mathbb{R}^d$.

\paragraph{(i) Inner-product alignment.}
For any $w\in\mathcal{W}$,
\begin{equation}
\label{eq:ip_equiv}
\arg\max_{w\in\mathcal{W}} \ \langle v,h(w)\rangle
\;=\;
\arg\min_{w\in\mathcal{W}} \ \Big( \|v-h(w)\|_2^2 - \|h(w)\|_2^2 \Big).
\end{equation}

\paragraph{(ii) Cosine alignment.}
Assume $\|v\|_2 \neq 0$ and $\|h(w)\|_2 \neq 0$ for all $w\in\mathcal{W}$.
Define the cosine similarity objective
$C(w) := \frac{\langle v,\, h(w)\rangle}{\|v\|_2\,\|h(w)\|_2}$,
and the normalized directions $\hat v := \frac{v}{\|v\|_2}$, $\hat h(w) := \frac{h(w)}{\|h(w)\|_2}$.
Then
\begin{equation}
\label{eq:cos_equiv}
\arg\max_{w\in\mathcal{W}} \ C(w)
\;=\;
\arg\min_{w\in\mathcal{W}} \ \|\hat v - \hat h(w)\|_2^2.
\end{equation}
\end{theorem}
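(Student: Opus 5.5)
The plan is to reduce both parts to the polarization identity $\|v-u\|_2^2 = \|v\|_2^2 - 2\langle v,u\rangle + \|u\|_2^2$, valid for all $u,v\in\mathbb{R}^d$. Both objectives will then be shown to be strictly decreasing affine transformations of each other, with the constant independent of $w$. Any such transformation preserves the set of optimizers. The argmin/argmax sets are compared as subsets of $\mathcal{W}$, so no attainment assumption is needed: if one set is empty, so is the other.

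For part (i), I apply the identity with $u=h(w)$. This gives
\begin{equation*}
\|v-h(w)\|_2^2 - \|h(w)\|_2^2 = \|v\|_2^2 - 2\langle v,h(w)\rangle .
\end{equation*}
Since $v$ is fixed, $\|v\|_2^2$ does not depend on $w$. The right-hand side is therefore $\phi(\langle v,h(w)\rangle)$ with $\phi(s)=\|v\|_2^2-2s$, which is strictly decreasing.

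I then check the argmax/argmin correspondence elementwise. Suppose $w^\star$ maximizes $\langle v,h(\cdot)\rangle$ over $\mathcal{W}$. Then $\langle v,h(w^\star)\rangle\ge\langle v,h(w)\rangle$ for every $w$, so $\phi(\langle v,h(w^\star)\rangle)\le\phi(\langle v,h(w)\rangle)$, and $w^\star$ is a minimizer of the right-hand side. The converse follows because $\phi$ is invertible and strictly monotone. No structure on $h$ or $\mathcal{W}$ is used.

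For part (ii), the assumptions $\|v\|_2\neq0$ and $\|h(w)\|_2\neq0$ make $\hat v$ and $\hat h(w)$ well-defined unit vectors. By bilinearity, $C(w)=\langle\hat v,\hat h(w)\rangle$. Applying the identity with unit vectors gives
\begin{equation*}
\|\hat v-\hat h(w)\|_2^2 = 2-2\langle \hat v,\hat h(w)\rangle = 2-2C(w).
\end{equation*}
The same strictly-decreasing-affine argument as in part (i) then yields \eqref{eq:cos_equiv}.

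There is no real obstacle here. The only point needing care is the elementwise set equality, rather than just equality of optimal values, together with the nonvanishing-norm hypothesis in part (ii). I will also remark that when $\|h(w)\|_2$ is constant on $\mathcal{W}$, or when $h$ is linear and the norm is controlled, the extra $-\|h(w)\|_2^2$ term in (i) is constant. In that case the argmax coincides with the argmin of the plain squared distance, which is the case invoked in the surrounding discussion.
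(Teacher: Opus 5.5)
Your proof is correct and follows essentially the same route as the paper's. Both arguments expand via the polarization identity, note that $\|v\|_2^2$ (respectively the unit norms of $\hat v,\hat h(w)$) is independent of $w$, and conclude from the strict monotonicity of $s\mapsto\|v\|_2^2-2s$ and $x\mapsto 2-2x$; your elementwise check of the optimizer sets is simply a more careful version of the paper's ``equivalent in the sense of sharing the same set of optimizers'' step. One caution on your closing remark: the extra $-\|h(w)\|_2^2$ term is constant only when $\|h(w)\|_2$ is genuinely fixed on $\mathcal{W}$, not merely because $h$ is linear or the norm is regularized.
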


\label{[proof]: inner product/ cosine & l2 matching}

\begin{proof}
We prove the two statements separately.

\paragraph{(i) Inner-product case.}
By the polarization identity, for any $w\in\mathcal{W}$,
\begin{equation}
\label{eq:l2_expand_v}
\|v-h(w)\|_2^2
= \|v\|_2^2 + \|h(w)\|_2^2 - 2\langle v,h(w)\rangle.
\end{equation}
Rearranging~\eqref{eq:l2_expand_v} yields
\[
\langle v,h(w)\rangle
=
\frac{1}{2}\|v\|_2^2
+
\frac{1}{2}\big(\|h(w)\|_2^2 - \|v-h(w)\|_2^2\big).
\]
Since $\|v\|_2^2$ is constant with respect to $w$, maximizing
$\langle v,h(w)\rangle$ over $\mathcal{W}$ is equivalent (in the sense of sharing
the same set of optimizers) to minimizing
\[
\|v-h(w)\|_2^2 - \|h(w)\|_2^2,
\]
which proves~\eqref{eq:ip_equiv}.

\paragraph{(ii) Cosine case.}
We do not assume $\|v\|_2=1$ nor $\|h(w)\|_2=1$. Instead, we \emph{define}
$\hat v := v/\|v\|_2$ and $\hat h(w) := h(w)/\|h(w)\|_2$, which are well-defined
under the stated assumptions and satisfy
$\|\hat v\|_2=\|\hat h(w)\|_2=1$.

Expanding the squared distance gives
\begin{align}
\|\hat v - \hat h(w)\|_2^2
&= \|\hat v\|_2^2 + \|\hat h(w)\|_2^2 - 2\langle \hat v, \hat h(w)\rangle \nonumber\\
&= 2 - 2\left\langle \frac{v}{\|v\|_2},\, \frac{h(w)}{\|h(w)\|_2}\right\rangle \nonumber\\
&= 2 - 2\,\frac{\langle v,h(w)\rangle}{\|v\|_2\,\|h(w)\|_2} \nonumber\\
&= 2 - 2C(w).
\label{eq:cos_l2_relation}
\end{align}
The mapping $x \mapsto 2-2x$ is strictly decreasing. Hence minimizing
$\|\hat v - \hat h(w)\|_2^2$ over $w$ is equivalent to maximizing $C(w)$ over $w$,
which establishes~\eqref{eq:cos_equiv}.
\end{proof}

\subsection{Equivalence of dot product calculation order}
\label{[proof]: eq of 1 order}

An important property of Kronecker product is that, for matrices $A,B,C,D$ with matching dimensions,
\begin{equation}
    \label{eq: Kronecker product}
    < A\otimes B , C\otimes D>_F = < A, C>_F \otimes < B, D>_F.
\end{equation}
For dot product (Frobenius inner product) between $A$ and $B$, $< A, B>_F$, it also can be rewritten in trace of matrix:
\begin{equation}
    \label{eq: dot product}
     < A, B>_F  = < B, A>_F = tr(A^{\top} B)  = tr( B A^{\top}). 
\end{equation}
The dot product between vector $a$ and $b$ can also be denoted through transposition: $a \cdot b = a^Tb$.
These properties serve as an important foundation of proofs in this paper.

\textbf{Non-sequential data}. We start by proving the equivalence for non-sequential data:

\begin{equation}
\label{eq: alternative dot product}
    \sum_{x_j \in S_{val}^{(t)}} \nabla l_j \cdot \nabla l_i = (\sum_{x_j \in S_{val}^{(t)}} g_j^{\top}g_i)(a_j^{\top}a_i) = a_i^{\top} (\sum_{x_j \in S_{val}^{(t)}} a_j g_j^{\top}) g_i.
\end{equation}

Here, the derivative of loss to $i$-th pre-activation as $g_i = \frac{\partial l}{\partial s_i} \in \mathbb{R}^{d_2 \times 1}$, and the $i$-th input vector as $a_i \in \mathbb{R}^{ d_1 \times 1}$.
The first equation is derived by:
\begin{equation*}
     \nabla l_j \cdot \nabla l_i =
     (g_j  \otimes a_j)  \cdot (g_i  \otimes a_i) = (g_j \cdot g_i) (a_j \cdot a_i) = (g_j^{\top}g_i)(a_j^{\top}a_i),
\end{equation*}
where the first equation is by definition and the second and third one applies the property of Kronecker product in Equation \ref{eq: Kronecker product} and dot product, respectively.

Looking into the last item, we have

\begin{equation*}
\begin{split}
    a_i^{\top} (\sum_{x_j \in S_{val}^{(t)}} a_j g_j^{\top}) g_i
    & = \sum_{x_j \in S_{val}^{(t)}} a_i^{\top} (a_j g_j^{\top}) g_i \\
    & = \sum_{x_j \in S_{val}^{(t)}} (a_i^{\top} a_j) (g_j^{\top} g_i) \\
    & = \sum_{x_j \in S_{val}^{(t)}} (g_j^{\top}g_i)(a_j^{\top}a_i).
\end{split}
\end{equation*}

The first two equations rearranges the combination of vectors. The last item is swapping scalar $(g_j^{\top}g_i)$ and $(a_j^{\top}a_i)$. Proved for non-sequential data.\\

\textbf{Sequential data}. 

\begin{equation*}
    <\sum_{x_j \in S_{val}^{(t)}} \nabla l_j , \nabla l_i>_F = <\sum_{x_j \in S_{val}^{(t)}} g_j^{\top}g_i, a_j^{\top}a_i>_F = < g_i^{\top}, a_i^{\top} (\sum_{x_j \in S_{val}^{(t)}} a_j g_j^{\top})>_F.
\end{equation*}

In computation for sequential data, the 3-dimensional tensor $(B, T, d)$ is generally reshaped as $(B T, d)$ for better position-agnostic parallelization. We only need to substitute $a_i \in \mathbb{R}^{ d_1 \times 1}$ to $a_i \in \mathbb{R}^{ d_1 \times T}$ and $g_i = \frac{\partial l}{\partial s_i} \in \mathbb{R}^{d_2 \times 1}$ to $g_i = \frac{\partial l}{\partial s_i} \in \mathbb{R}^{d_2 \times T}$ in the above proving process. Notably, the equivalences remain, as all computation is inside the dot product. 
By denoting the trace matrix as $tr(\cdot)$, the second equation can be converted into 

\begin{equation*}
      tr((a_j^{\top}a_i)^{\top} (\sum_{x_j \in S_{val}^{(t)}} g_j^{\top}g_i ) ) = tr(  a_i^{\top} (\sum_{x_j \in S_{val}^{(t)}} a_j g_j^{\top})g_i),
\end{equation*}

which is essentially the same as Equation \ref{eq: alternative dot product}. Further, using the property of dot product, $< g_i^{\top}, a_i^{\top} (\sum_{x_j \in S_{val}^{(t)}} a_j g_j^{\top})>_F = < a_i, (\sum_{x_j \in S_{val}^{(t)}} a_j g_j^{\top})g_i>_F$. Using this equivalence, we can flexibly select the more efficient way for calculation according to dimensionality.

% The second-derivation, namely the ghost product item, can rewritten into:

% \begin{equation*}
% \begin{split}
%     S^{ghost}_i & = \sum_{i' = T(i-1)+1}^{Ti+1} \sum_{j' = 1}^{TB_{val}} (a_{j'}a_{i'}^{\top})(g_{j'}g_{i'}^{\top}) \\
%     & = \sum_{i' = T(i-1)+1}^{Ti+1} \sum_{{j'} = 1}^{TB_{val}} ( \textbf{A}_{tr}[{i'},:] \cdot  \textbf{A}_{val}[{j'},:]) \times (\textbf{B}_{tr}[{i'},:] \cdot  \textbf{B}_{val}[{j'},:]).
% \end{split}
% \end{equation*}

\subsection{Equivalence of vector-Hessian-vector calculation order}
\label{[proof]: eq of 2 order}

\textbf{Non-sequential data.}  
We first prove the corrected derivation. The goal is to compute the influence of a set of candidate samples on a target sample $i$, mediated by the Hessian estimated from a validation set.
\begin{equation*}
    \text{Score} = \nabla l_i^{\top} (\sum_{x_j \in S_{val}^{(t)}} H_j)(\sum_{x_k \in S_{cand}^{(c, t)} } \nabla l_k).
\end{equation*}

Assume the block-diagonal property for Hessian matrix in a layer level, and the K-FAC approximation where $H_{j,\ell} \approx G_{\ell} \otimes A_{\ell}$ represents the aggregate Hessian statistics derived from the validation batch. 
Using the property of the Kronecker product $(A \otimes B)(C \otimes D) = (AC) \otimes (BD)$, we expand the term for each layer $\ell$:

\begin{equation*}
\begin{split}
    & \nabla l_i^{\top} H_{val} \sum_{x_k \in S_{cand}^{(c, t)} } \nabla l_k \\
    = & \sum_{\ell = 1}^L ( g_{i,\ell}^{\top} \otimes {a}_{i,\ell}^{\top}) ( G_{\ell} \otimes A_{\ell}) \sum_{x_k \in S_{cand}^{(c, t)}}( g_{k,\ell} \otimes {a}_{k,\ell}) \\
    = & \sum_{\ell = 1}^L \sum_{x_k \in S_{cand}^{(c, t)}} \left[ ( g_{i,\ell}^{\top} \otimes {a}_{i,\ell}^{\top}) ( G_{\ell} \otimes A_{\ell}) ( g_{k,\ell} \otimes {a}_{k,\ell}) \right] \\
    = & \sum_{\ell = 1}^L \sum_{x_k \in S_{cand}^{(c, t)}} \left[ (g_{i,\ell}^{\top} G_{\ell} g_{k,\ell}) \otimes ({a}_{i,\ell}^{\top} A_{\ell} {a}_{k,\ell}) \right].
\end{split}
\end{equation*}

Since $g_{i,\ell}^{\top} G_{\ell} g_{k,\ell}$ and ${a}_{i,\ell}^{\top} A_{\ell} {a}_{k,\ell}$ are both scalars, the Kronecker product becomes a simple scalar multiplication. We now expand $G_{\ell} = \sum_{m=1}^{|B_{val}|} g_{m,\ell} g_{m,\ell}^{\top}$:

\begin{equation*}
\begin{split}
    g_{i,\ell}^{\top} G_{\ell} g_{k,\ell} 
    & = g_{i,\ell}^{\top} (\sum_{m=1}^{|B_{val}|} g_{m,\ell} g_{m,\ell}^{\top}) g_{k,\ell} \\
    & = \sum_{m=1}^{|B_{val}|} (g_{i,\ell}^{\top} g_{m,\ell}) (g_{m,\ell}^{\top} g_{k,\ell})  = \sum_{m=1}^{|B_{val}|} (g_{i,\ell} \cdot g_{m,\ell}) (g_{m,\ell} \cdot g_{k,\ell}).
\end{split}
\end{equation*}

Similarly, for the activation term:
\begin{equation*}
    {a}_{i,\ell}^{\top} A_{\ell} {a}_{k,\ell} = \sum_{m=1}^{|B_{val}|} (a_{i,\ell} \cdot a_{m,\ell}) (a_{m,\ell} \cdot a_{k,\ell}).
\end{equation*}

Substituting these back, we obtain the final corrected equation. Note that the summation over candidates $x_k$ must remain outside the product terms:

\begin{equation*}
    \text{Score} = \sum_{\ell = 1}^L \sum_{x_k \in S_{cand}^{(c, t)}} \left[ \left( \sum_{m \in S_{val}^{(t)}} ( g_{i,\ell} \cdot g_{m,\ell} ) ( g_{m,\ell} \cdot g_{k,\ell} ) \right) \cdot \left( \sum_{m \in S_{val}^{(t)}} ( a_{i,\ell} \cdot a_{m,\ell} ) ( a_{m,\ell} \cdot a_{k,\ell} ) \right) \right].
\end{equation*}

%Proved. \sm{maybe say, Hence it is proved.}

\textbf{Sequential data.} 
The derivation can be naturally extended to sequential data where activations and gradients are matrices (e.g., $a \in \mathbb{R}^{d_1 \times T}$). 
We replace the vector dot product with the Frobenius inner product $\langle X, Y \rangle_F = \text{Tr}(X^{\top} Y)$.
The logic of K-FAC remains consistent: it aggregates statistics across the temporal dimension, effectively treating time steps as additional batch samples.
Thus, the scalar terms simply change their operator:

\begin{equation*}
    \text{Score} = \sum_{\ell = 1}^L \sum_{x_k \in S_{cand}^{(c, t)}} \Bigg[  \left( \sum_{m \in S_{val}^{(t)}} \langle g_{i,\ell} , g_{m,\ell} \rangle_F \cdot \langle g_{m,\ell}, g_{k,\ell} \rangle_F \right) 
    \cdot  \left( \sum_{m \in S_{val}^{(t)}} \langle a_{i,\ell}, a_{m,\ell} \rangle_F \cdot \langle a_{m,\ell} , a_{k,\ell} \rangle_F \right) \Bigg].
\end{equation*}

\textbf{Empirical note.}
We implemented the full K-FAC-based scoring above as a drop-in 
replacement for the raw Gram matrix in 
Section~\ref{sec:method:ftw}, intended to test whether richer 
second-order information improves data selection. In our 
preliminary experiments on the same Llama-3.2-1B/TyDiQA setup as 
Section~\ref{sec:ablation}, however, the K-FAC variant produced 
no consistent improvement over the isotropic-Hessian (raw inner 
product) baseline, while incurring substantially higher per-step 
overhead from maintaining and applying $G_\ell, A_\ell$ at each 
layer. A possible explanation is that LoRA's low-rank structure 
already captures most of the curvature variance relevant for 
selection, making the additional K-FAC information largely 
redundant; a more rigorous investigation—including alternative 
Hessian approximations such as diagonal Fisher and partial-layer 
K-FAC—is left to future work. We include the full derivation here 
for completeness and to document the design space explored.

\section{Complexity Comparison}
\label{sec: Complexity Comparison}

In sequential data, consider pre-activation derivative $g_i, g_j \in \mathbb{R}^{T \times d_2}$ and input $a_i, a_j \in \mathbb{R}^{T \times d_1}$. For simplicity of notation, we ignore all layer indicators, $\ell$, when not required, and assume input and output dimension $d_1$ and $d_2$ are the maximum value for all layers. This is a valid assumption, as we use random projection with a constant maximum projection dimension for all layers in implementation.

\subsection{First-Order Derivative}

\textbf{Naive}. 

\begin{equation*}
    <\sum_{x_j \in S_{val}^{(t)}} \nabla l_j , \nabla l_i>_F = <\sum_{x_j \in S_{val}^{(t)}} a_j g_j^{\top}, a_i g_i^{\top}>_F
\end{equation*}

In this equation, $( (a_j^{\top} g_j)$ and $(a_i^{\top} g_i)$ takes $\mathcal{O}(T^2d_1d_2)$ in time and $\mathcal{O}(d_1d_2)$ to store the resulting matrix, respectively. The final dot product $(a_j^{\top} g_j)  \cdot (a_i^{\top} g_i)$ takes $O(d_1d_2)$ in time. Considering $\nabla l_j \cdot \nabla l_i$ needs to calculated for every pair of training and validation data in the batch, the total time complexity for all layers is $\mathcal{O}(LT^2B_{tr}B_{val}d_1d_2)$, space complexity is $\mathcal{O}(LB_{tr}B_{val}d_1d_2)$.\\\\

\textbf{Ghost}.

\begin{equation*}
<\sum_{x_j \in S_{val}^{(t)}} \nabla l_j , \nabla l_i>_F = <\sum_{x_j \in S_{val}^{(t)}} g_j^{\top}g_i, a_j^{\top}a_i>_F
\end{equation*}

Using ghost dot-product, $(g_j^{\top}g_i)$ takes $\mathcal{O}(T^2d_2)$ in time and generates a matrix requiring $\mathcal{O} (T^2)$ storage space. Similarly, $(a_j^{\top}a_i)$ takes $\mathcal{O}(T^2d_1)$ in time and $\mathcal{O} (T^2)$ in space. The final dot product is $\mathcal{O} (T^2)$ in time. Also, we need to consider every pair of training and validation data in the batch.
Nonetheless, as we only store all activations in the forward propagation ($\mathcal{O}(LT(B_{tr}+B_{val})d_1)$), and discard used activations during back-propagation, the required space is not linearly increasing with layer number. The peak space requirement is in backpropagation of last layer, i.e., $\mathcal{O}(LTd_1 + B_{tr}B_{val}T^2)$. Therefore, the total time complexity is $\mathcal{O} (LT^2B_{tr}B_{val} ( d_1 + d_2) )$, and space complexity is $\mathcal{O}(LT(B_{tr}+B_{val})d_1 + T^2B_{tr}B_{val})$.

\textbf{Our Method}.

\begin{equation}
<\sum_{x_j \in S_{val}^{(t)}} \nabla l_j , \nabla l_i>_F = < g_i^{\top}, a_i^{\top} (\sum_{x_j \in S_{val}^{(t)}} a_j g_j^{\top})>_F
\end{equation}

The summation over outer products of all validation samples $\sum_{x_j \in S_{val}^{(t)}} a_j g_j^{\top}$ takes $\mathcal{O} (B_{val}Td_1d_2)$ in time and $\mathcal{O}(d_1d_2)$ for storage space. Then the combination with the first $a_i^{\top}$ takes $\mathcal{O}(Td_1d_2)$ in time and $\mathcal{O}(Td_2)$ in space. Lastly, we make a dot product with $g_i$, consuming $\mathcal{O}(Td_2)$ in time and $\mathcal{O}(Td_2)$ for the final result. 
Similarly, all productions are calculated layer by layer. Therefore, the final total time complexity is $\mathcal{O}( LTB_{tr}d_2(B_{val}d_1 + T))$. The peak memory requirement is only $\mathcal{O}(LT(B_{tr}+B_{val})d_1 + B_{tr}d_2\max(d_1, T) )$. As discussed in Section \ref{[proof]: eq of 1 order}, symmetrically, we can choose to first combining with $g_i$ than $a_i^{\top}$, which is equivalent in math. This option takes $\mathcal{O}( LTB_{tr}d_1(B_{val}d_2 + T))$ in time, and $\mathcal{O}(LT(B_{tr}+B_{val})d_1 + B_{tr}d_1\max(d_2, T) )$ in space.

\section{Additional Experiment Details}

\subsection{Setting}
\label{subsec: setting}

All experiments were implemented using the PyTorch framework and 
the Hugging Face Transformers and PEFT libraries, running on NVIDIA 
A100 (40\,GB), H100 (80\,GB), H200 (141\,GB), and Tesla V100  (32\,GB)GPUs. 
We use Llama-3.2-1B as our primary backbone, fine-tuned via 
Low-Rank Adaptation (LoRA). Unless otherwise stated, the LoRA rank 
is $r=8$, scaling factor $\alpha=32$, and dropout rate $0$, applied 
to all linear layers (query, key, value, output, gate, up, and 
down projections). We optimize with Adam ($\beta_1{=}0.9$, 
$\beta_2{=}0.999$), maximum learning rate $1\times10^{-4}$ and 
minimum $1\times10^{-5}$, with linear warmup over the first 800 
steps followed by decay over 2{,}000 steps. The training batch 
size is 8 with maximum sequence length 2{,}048, using 
mixed-precision (BF16) via Hugging Face Accelerate.

\subsection{Datasets and Models}
\label{app:licenses}

We use the following publicly released datasets and pre-trained 
models, all consistent with their licenses for non-commercial 
research purposes:

\textbf{Datasets.}
\begin{itemize}
    \item \textbf{Open-Instruct}~\citep{lambert2024tulu3} 
    --- ODC-BY 1.0.
    \item \textbf{FLAN v2}~\citep{longpre2023flan} --- Apache 2.0.
    \item \textbf{Chain-of-Thought}~\citep{wei2022chain} 
    --- Apache 2.0.
    \item \textbf{Dolly}~\citep{conover2023dolly} --- CC-BY-SA 3.0.
    \item \textbf{OASST1}~\citep{kopf2023openassistant} --- 
    Apache 2.0.
    \item \textbf{MMLU}~\citep{hendrycksmeasuring} --- MIT License.
    \item \textbf{TyDiQA}~\citep{clark2020tydi} --- Apache 2.0.
\end{itemize}

\textbf{Models.}
\begin{itemize}
    \item \textbf{Llama-3.2-1B} --- Meta Llama 3.2 Community 
    License Agreement.
    \item \textbf{Qwen3-0.6B} --- Apache 2.0.
\end{itemize}

\subsection{Ablation Variant Definitions}
\label{app:ablation_variants}

We list the full set of variants studied in 
Section~\ref{sec:ablation}. Variants 5 and 6 are excluded from the 
multi-seed comparison in Table~\ref{tab:ablation_studies} due to 
training divergence; their preliminary single-seed trajectories are 
shown in Figure~\ref{fig:ablation_f1}.

\begin{enumerate}
    \item \textbf{Hard Filter + Reweight.} Replaces the greedy 
    filtering (Stage 1) with a top-$k$ selection based on 
    optimizer-aware scores, while retaining the NNLS weighting 
    stage (Stage 2).
    
    \item \textbf{Optimizer-Aware Filter Only.} Selects top 
    candidates using optimizer-aware scores without the subsequent 
    weighting stage (similar to LESS).
    
    \item \textbf{Vanilla Filter Only.} Selects candidates based 
    on raw gradient dot products (similar to TracIn), without 
    optimizer-aware preconditioning or the weighting stage.
    
    \item \textbf{Vanilla Filter + Reweight.} Implements the 
    two-stage Filter-then-Weight framework using raw gradients for 
    both filtering and weighting, removing all optimizer-aware 
    preconditioning.
    
    \item \textbf{Unbounded Weights.} Relaxes the non-negative 
    constraint in Eq.~\ref{eq:matching}, allowing weights 
    to take negative values.
    
    \item \textbf{Token-Level Score.} Restricts gradient similarity 
    computation to identical token positions, ignoring 
    sequence-level context.
    
    \item \textbf{Ours.} The full Filter-then-Weight algorithm proposed in this paper.
\end{enumerate}

\begin{table}[t]
    \centering
    \caption{\textbf{Running Time Comparison.} We report the average running time (seconds per step) on NVIDIA H200 GPUs.}
    \label{tab:running_time}
    \vspace{0.2cm}
    \begin{small}
    \begin{tabular}{lcc}
        \toprule
        \textbf{Method} & \textbf{Time (s / step)} & \textbf{Relative to SFT} \\
        \midrule
        \multicolumn{3}{l}{\textit{Standard Training}} \\
        Full Data (SFT, 4 steps) & 5.75 & 1.0$\times$ \\
        \midrule
        \multicolumn{3}{l}{\textit{Selection Baselines (Online Adapted)}} \\
        TracIn \citep{TracIn} & 4.12 & 0.72$\times$ \\
        LESS \citep{xia2024less} & 4.14 & 0.72$\times$ \\
        GREATS \citep{wang2024greats} & 6.23 & 1.08$\times$ \\
        GRAD-MATCH \citep{killamsetty2021grad} & 12.13 & 2.11$\times$ \\
        \midrule
        \multicolumn{3}{l}{\textit{Ablation: Implementation Efficiency}} \\
        Ours (w/ Ghost Dot-Product) & OOM  & OOM \\
        \textbf{Ours (Top-K filtering)} & \textbf{4.49} & \textbf{0.78$\times$} \\
         \textbf{Ours (Greedy filtering)} & \textbf{6.47} & \textbf{1.12$\times$} \\
        \bottomrule
    \end{tabular}
    \end{small}
\end{table}

\subsection{In-iteration Running Time Analysis}
\label{subsec: more results}

We provide a running comparison over baselines, with averaged running time per step in seconds. Out of efficiency concern, we adopt the derivative computation order introduced in Section \ref{sec:efficient_gradient}. We also compare over a variant of our method using ghost dot-product~\citep{wang2024data}, which is out of memory with projection dimension of 256. The results are given in Table \ref{tab:running_time}. Note that full training setting we train on every arriving mini-batch without gradient accumulation (set to be 4 in this paper), so it can be slower than selection methods.

As discussed in Section \ref{sec:ablation}, the top-k filtering based can achieve comparable peak performance, with little increased cost compared with filter-only methods. To achieve stable performance, it is suggested to use greedy filtering is necessary, which introduce extra computational cost for training gradient similarity in greedy selection stage. Nonetheless, it is still acceptable compared to the SFT baseline. Compared to GREATS, it takes a similar greedy strategy in the selection stage, but requires an additional weight solver. Fortunately, we can reuse computed gradient similarity score in the selection stage for subset samples, resulting in only slightly increased overhead.

\subsection{Empirical Memory Frontier at Long Context}
\label{sec:memory_frontier}

This appendix expands the complexity discussion in 
Section~\ref{sec:efficient_gradient}. While 
Table~\ref{tab:complexity_comparison} establishes our 
$\mathcal{O}(T B_{tr} d_2)$ score-tensor cost against Ghost's 
$\mathcal{O}(T^2 B_{tr} B_{val})$ asymptotically, whether this gap 
binds in practice depends on what else lives in GPU memory during a 
training step. The long-context feasibility frontier is shaped by 
two orthogonal memory axes—a \emph{model-side floor} 
(weights, activations, and notably the cross-entropy logits buffer 
of size $\mathcal{O}(B T V)$, where $V \approx 128$K for Llama-3) 
and an \emph{influence-side ceiling} (the score tensor of 
Eq.~\ref{eq:reordered inner-prod}). Each axis must be addressed 
separately: an asymptotic advantage on one becomes empirically 
observable only after the other has been brought below it.

\subsubsection{Influence-Side Microbenchmark}
\label{sec:mf_microbench}

We isolate influence-side scaling with a pure-tensor microbenchmark 
on a single 80\,GB H100. Synthetic gradient buffers of the exact 
pipeline shapes (activations $B_{tr} \cdot T \times d_1$ and 
backprops $B_{tr} \cdot T \times d_2$, analogously for validation 
tokens) are fed directly to the score-tensor kernel of 
Eq.~\ref{eq:reordered inner-prod} in both Ghost and our forms; the 
model forward and cross-entropy are bypassed entirely. Setting: 
$B_{val}{=}4$, $d_1{=}d_2{=}32$ (per-LoRA-layer rank), fp32 
intermediates, 1 warmup + 3 timed iterations. 
Tables~\ref{tab:mf_empirical_bottleneck} 
and~\ref{tab:mf_empirical_walltime} report peak memory and 
wallclock.

\begin{table}[h]
\centering
\small
\caption{Empirical peak GPU memory (MB) of the score tensor under 
the Llama-3.2-1B LoRA setting. \textbf{Bold} = OOM on H100 80\,GB.}
\label{tab:mf_empirical_bottleneck}
\setlength{\tabcolsep}{4pt}
\resizebox{0.9\textwidth}{!}{%
\begin{tabular}{c c rrrrrrr}
\toprule
$B_{tr}$ & Method & $T{=}512$ & $T{=}1024$ & $T{=}2048$ & $T{=}4096$ & $T{=}8192$ & $T{=}16384$ & $T{=}65536$ \\
\midrule
\multirow{2}{*}{8}   & Ghost & 136 & 439 & 1{,}651 & 6{,}489 & 25{,}829 & \textbf{OOM} & \textbf{OOM} \\
                     & Ours  &  36 &  39 &    44 &     55 &     76 &      118 &      371 \\
\midrule
\multirow{2}{*}{32}  & Ghost & 441 & 1{,}654 & 6{,}495 & 25{,}842 & \textbf{OOM} & \textbf{OOM} & \textbf{OOM} \\
                     & Ours  &  43 &   52 &    69 &    105 &    177 &      321 &     1{,}183 \\
\midrule
\multirow{2}{*}{64}  & Ghost & 848 & 3{,}273 & 12{,}955 & 51{,}646 & \textbf{OOM} & \textbf{OOM} & \textbf{OOM} \\
                     & Ours  &  51 &   68 &   103 &    173 &    312 &      591 &     2{,}265 \\
\midrule
\multirow{2}{*}{128} & Ghost & 1{,}662 & 6{,}511 & 25{,}874 & \textbf{OOM} & \textbf{OOM} & \textbf{OOM} & \textbf{OOM} \\
                     & Ours  &     68  &    102  &    171  &    308 &    583 &    1{,}132 &    4{,}429 \\
\bottomrule
\end{tabular}
}
\end{table}

\begin{table}[h]
\centering
\small
\caption{Empirical wallclock (ms, mean over 3 calls) of one 
score-tensor invocation. Same setting as 
Table~\ref{tab:mf_empirical_bottleneck}.}
\label{tab:mf_empirical_walltime}
\setlength{\tabcolsep}{4pt}
\resizebox{0.95\textwidth}{!}{%
\begin{tabular}{c c rrrrrrr}
\toprule
$B_{tr}$ & Method & $T{=}512$ & $T{=}1,024$ & $T{=}2,048$ & $T{=}4,096$ & $T{=}8,192$ & $T{=}16,384$ & $T{=}65,536$ \\
\midrule
\multirow{2}{*}{8}   & Ghost & 0.18 & 0.46 & 1.55 & 5.59 & 22.30 & \textbf{OOM} & \textbf{OOM} \\
                     & Ours  & 0.20 & 0.13 & 0.12 & 0.15 & 0.08  & 0.12         & 0.35         \\
\midrule
\multirow{2}{*}{64}  & Ghost & 0.75 & 2.87 & 11.29 & 48.73 & \textbf{OOM} & \textbf{OOM} & \textbf{OOM} \\
                     & Ours  & 0.08 & 0.09 &  0.12 &  0.20 & 0.32         & 0.60         & 2.05         \\
\midrule
\multirow{2}{*}{128} & Ghost & 1.48 & 5.68 & 22.46 & \textbf{OOM} & \textbf{OOM} & \textbf{OOM} & \textbf{OOM} \\
                     & Ours  & 0.08 & 0.11 & 0.17  & 0.30 & 0.57 & 1.03 & 3.97 \\
\bottomrule
\end{tabular}
}
\end{table}

The measurements validate the asymptotic prediction. Ghost's peak 
memory grows by exactly $4\times$ per $T$-doubling, 
matching the $T^2$ term, while ours grows by $\le 2\times$, matching 
the $T$ term. Ghost OOMs once $B_{tr} \cdot T^2$ exceeds roughly 
$2.6 \times 10^9$ tokens$^2$; ours has no observed OOM at any cell 
tested. Wallclock follows the same scaling: at equal feasibility 
($B_{tr}{=}8, T{=}8192$), our score-step is $278\times$ faster 
($22.3$\,ms vs.\ $0.08$\,ms).

\subsubsection{End-to-End Frontier with Model-Side Mitigation}
\label{sec:mf_e2e}

The microbenchmark above isolates the influence-side advantage, but 
end-to-end training is also bounded by the model-side floor. Under 
stock PyTorch, training in our reference setting (Llama-3.2-1B 
LoRA, $B_{tr}{=}8$, $B_{val}{=}4$) OOMs at $T{=}4096$, with the 
binding allocation being the $15.6$\,GB cross-entropy logits buffer 
rather than the score tensor. We therefore activate Liger 
Kernel~\citep{hsuliger}, which streams the lm\_head matmul 
through softmax and cross-entropy without materializing the 
$(B \cdot T, V)$ logits tensor, reducing the loss-step memory from 
$\mathcal{O}(B T V)$ to $\mathcal{O}(B T \cdot \mathrm{hidden})$ 
(an $\approx 60\times$ saving for Llama-3). 
Table~\ref{tab:mf_liger_oom_frontier} shows that Liger extends the 
end-to-end frontier from $T{=}4096$ to $T{=}16384$.

\begin{table}[h]
\centering
\small
\caption{End-to-end OOM frontier with and without Liger Kernel 
(Llama-3.2-1B LoRA, $B_{tr}{=}8$, $B_{val}{=}4$, single H100 80\,GB).}
\label{tab:mf_liger_oom_frontier}
\setlength{\tabcolsep}{10pt}
\begin{tabular}{c c c}
\toprule
$T$ & Stock PyTorch & + Liger \\
\midrule
$2{,}048$  & \checkmark & \checkmark \\
$4{,}096$  & \textbf{OOM} ($15.6$\,GB CE) & \checkmark \\
$8{,}192$  & --- & \checkmark \\
$16{,}384$ & --- & \checkmark \\
$32{,}768$ & --- & \textbf{OOM} \\
\bottomrule
\end{tabular}
\end{table}

At this post-Liger frontier, the asymptotic gap from 
Table~\ref{tab:complexity_comparison} becomes the binding 
constraint: at $T{=}16384, B_{tr}{=}8, B_{val}{=}4$, the Ghost 
score tensor would require $\approx 137$\,GB 
($16384^2 \cdot 32 \cdot 4$ entries, fp32), exceeding GPU capacity, 
while ours uses $118$\,MB. The two axes thus compose 
multiplicatively: model-side mitigation shrinks the method-agnostic 
floor (cross-entropy), and our influence-side decomposition shrinks 
the method-specific ceiling (score tensor). Notably, with better optimization, our method can scale up to even longer context. 

\newpage

\end{document}